\newtheorem{theorem}{Theorem}
\newtheorem{lemma}{Lemma}
\DeclareMathOperator{\rad}{\mathit{rad}}
\DeclareMathOperator{\oracle}{\mathit{Oracle}}
\DeclareMathOperator{\coracle}{\mathit{COracle}}
\DeclareMathOperator{\cost}{\mathit{Cost}}
\DeclareMathOperator{\out}{\texttt{Out}}
\DeclareMathOperator{\tlog}{\widetilde{\log}}
\DeclareMathOperator{\supp}{\textit{supp}}
\newcommand{\wdiverse}{w_{\textsc{div}}}
\newcommand{\wtop}{w_{\textsc{top}}}
\DeclareMathOperator*{\argmax}{\arg\!\max}
\newcommand{\cut}{\textsc{CACO}}
\newcommand{\longcut}{Cutting Arms using a Combinatorial Oracle (\cut{})}
\newcommand{\brutas}{\textsc{BRUTaS}}
\newcommand{\longbrutas}{Budgeted Rounds Updated Targets Successively (\brutas)}
\newcommand{\unifalg}{\textsc{Uniform}}
\newcommand{\randalg}{\textsc{Random}}
\newcommand{\swapalg}{\textsc{SWAP}}
\algrenewcommand\algorithmicforall{\textbf{foreach}}
\title{Making the Cut: \\A Bandit-based Approach to Tiered Interviewing}
 \author{
  Candice Schumann$^\star$ \qquad Zhi Lang$^\star$ \qquad Jeffrey S. Foster$^\dagger$ \qquad John P. Dickerson$^\star$ \\
  $^\star$University of Maryland \qquad $^\dagger$Tufts University\\
  \texttt{\{schumann,zlang\}@cs.umd.edu}, \texttt{jfoster@cs.tufts.edu}, \texttt{john@cs.umd.edu} \\  
 }
\begin{document}
\maketitle

\begin{abstract}
Given a huge set of applicants, how should a firm allocate sequential resume screenings, phone interviews, and in-person site visits?  In a tiered interview process, later stages (e.g., in-person visits) are more informative, but also more expensive than earlier stages (e.g., resume screenings).  Using accepted hiring models and the concept of structured interviews, a best practice in human resources, we cast tiered hiring as a combinatorial pure exploration (CPE) problem in the stochastic multi-armed bandit setting. The goal is to select a subset of arms (in our case, applicants) with some combinatorial structure.  We present new algorithms in both the probably approximately correct (PAC) and fixed-budget settings that select a near-optimal cohort with provable guarantees.  We show via simulations on real data from one of the largest US-based computer science graduate programs that our algorithms make better hiring decisions or use less budget than the status quo.

\end{abstract}

\vspace{-0.5cm}
\begin{quote}{\small\emph{`... nothing we do is more important than hiring and developing people. At the end of the day, you bet on people, not on strategies.'' -- Lawrence Bossidy, \emph{The CEO as Coach} (1995)}}
\end{quote}

\section{Introduction}\label{sec:intro}
Hiring workers is expensive and lengthy.  The average cost-per-hire in the United States is \$4,129~\citep{Society16:Human}, and  with over five million hires per month on average, total annual hiring cost in the United States tops hundreds of billions of dollars~\citep{United18:Job}.  In the past decade, the average length of the hiring process has doubled to nearly one month~\citep{Chamberlain17:How}.  At every stage, firms expend resources to learn more about each applicant's true quality, and choose to either cut that applicant 
or continue interviewing with the intention of offering employment.  

In this paper, we address the problem of a firm hiring a \emph{cohort} of multiple workers, each with unknown true utility, over multiple stages of structured interviews.  We operate under the assumption that a firm is willing to spend an increasing amount of resources---e.g., money or time---on applicants as they advance to later stages of interviews. Thus, the firm is motivated to aggressively ``pare down'' the applicant pool at every stage, culling low-quality workers so that resources are better spent in more costly later stages.
This concept of tiered hiring can be extended to crowdsourcing or finding a cohort of trusted workers. At each successive 
stage, crowdsourced workers are given harder tasks.

Using techniques from the multi-armed bandit (MAB) and submodular optimization literature, we present two new algorithms---in the probably approximately correct (PAC) (\S \ref{sec:theory-pac}) and fixed-budget settings (\S \ref{sec:theory-budget})---and prove upper bounds that select a near-optimal cohort in this restricted setting.  We explore those bounds in simulation and show that the restricted setting is not necessary in practice (\S \ref{sec:experiments}).  Then, using real data from admissions to a large US-based computer science Ph.D.~program, we show that our algorithms yield \emph{better} hiring decisions at equivalent cost to the status quo---or \emph{comparable} hiring decisions at lower cost (\S \ref{sec:experiments}).

\section{A Formal Model of Tiered Interviewing}\label{sec:background}

In this section, we provide a brief overview of related work, give necessary background for our model, and then formally define our general multi-stage combinatorial MAB problem.  
Each of our $n$ applicants is an arm $a$ in the full set of arms $A$. Our goal is to select $K < n$ arms that maximize some objective $w$ using a maximization oracle.  We split up the review/interview process into $m$ stages, such that each stage $i \in [m]$ has per-interview information gain $s_i$, cost $j_i$, and number of required arms $K_i$ (representing the size of the ``short list'' of applicants who proceed to the next round). We want to solve this problem using either a confidence constraint ($\delta$, $\epsilon$), or a budget constraint over each stage ($T_i$). We rigorously define each of these inputs below.

\paragraph{Multi-armed bandits.}
The multi-armed bandit problem allows for modeling resource allocation during sequential decision making. \citet{Bubeck12:Regret} provide a general overview of historic research in this field. In a MAB setting there is a set of $n$ arms $A$. Each arm $a\in A$ has a true utility $u(a)\in[0,1]$, which is unknown. When an arm $a\in A$ is pulled, a reward is pulled from a distribution with mean $u(a)$ and a $\sigma$-sub-Gaussian tail. These pulls give an empirical estimate $\hat{u}(a)$ of the underlying utility, and an uncertainty bound $\rad(a)$ around the empirical estimate, i.e., $\hat{u}(a)-\rad(a)< u(a) < \hat{u}(a)+\rad(a)$ with some probability $\delta$. Once arm $a$ is pulled (e.g, an application is reviewed or an interview is performed), $\hat{u}(a)$ and $\rad(a)$ are updated. 

\paragraph{Top-K and subsets.}
Traditionally, MAB problems focus on selecting a single best (i.e., highest utility) arm.  Recently, MAB formulations have been proposed that select an optimal subset of $K$ arms. \citet{Bubeck13:Multiple} propose a budgeted algorithm (SAR) that successively accepts and rejects arms. We build on work by \citet{Chen14:Combinatorial}, which generalizes SAR to a setting with a combinatorial objective. They also outline a fixed-confidence version of the combinatorial MAB problem. In the \citet{Chen14:Combinatorial} formulation, the overall goal is to choose an optimal cohort $M^*$
from a decision class $\mathcal{M}$.  In this work, we use decision class $\mathcal{M}_K(A)=\{M\subseteq A \bigm| |M|=K\}$.
A cohort is optimal if it maximizes a linear objective function $w:\mathbb{R}^n\times \mathcal{M}_K(A)\rightarrow \mathbb{R}$.  \citet{Chen14:Combinatorial} rely on a maximization oracle, as do we, defined as
{\small
\begin{equation}\label{eq:oracle}
\oracle_K(\hat{\mathbf{u}},A)=\argmax_{M\in\mathcal{M}_K(A)}w(\hat{\mathbf{u}},M).
\end{equation}%
}
\citet{Chen14:Combinatorial} define a \emph{gap score} for each arm $a$ in the optimal cohort $M^*$, which is the difference in combinatorial utility between $M^*$ and the best cohort \emph{without} arm $a$. For each arm $a$ not in the optimal set $M^*$, the gap score is the difference in combinatorial utility between $M^*$ and the best set \emph{with} arm $a$. Formally, for any arm $a \in A$, the gap score $\Delta_a$ is defined as
{\small
\begin{equation}\label{eq:gap_score}
\Delta_a=
	\begin{cases}
		w(M^*)-\max_{\{M \ | \ M\in \mathcal{M}_K \wedge a\in M\}} w(M) ,& \text{if } a\notin M^*\\
		w(M^*)-\max_{\{M \ | \ M\in \mathcal{M}_K \wedge a\notin M\}} w(M) ,& \text{if } a \in M^*.
	\end{cases}
\end{equation}%
}%
Using this gap score we estimate the hardness of a problem as the sum of inverse squared gaps:
{\small
\begin{equation}\label{eq:hardness}
\mathbf{H}=\sum_{a\in A} \Delta_a^{-2}
\end{equation}%
}%
This helps determine how easy it is to differentiate between arms at the border of accept/reject.

\paragraph{Objectives.}
\citet{Cao15:Top} tighten the bounds of \citet{Chen14:Combinatorial} where the objective function is Top-K, defined as
{\small
\begin{equation}\label{eq:top-k}
\wtop(\mathbf{u},M)=\sum_{a\in M}u(a).
\end{equation}%
}%
In this setting the objective is to pick the $K$ arms with the highest utility. \citet{Jun16:Top} look at the Top-K MAB problem with batch arm pulls, and \citet{Singla15:Learning} look at the Top-K problem from a crowdsourcing point of view.

In this paper, we explore a different type of objective that balances both individual utility and the diversity of the set of arms returned.  Research has shown that a more diverse workforce produces better products and increases productivity~\citep{Desrochers01:Local,Hunt15:Diversity}. Thus, such an objective is of interest to our application of hiring workers.  In the document summarization setting, \citet{Lin11:Class} introduced a \emph{submodular} diversity function where the arms are partitioned into $q$ disjoint groups $P_1,\cdots,P_q$:
{\small
\begin{equation}\label{eq:diversity}
\wdiverse(\mathbf{u},M)=\sum_{i=0}^{q}\sqrt{\sum_{a \in P_i\cap M}u(a)}.
\end{equation}%
}%
\citet{Nemhauser78:Analysis} prove theoretical bounds for the simple greedy algorithm that selects a set that maximizes a submodular, monotone function. \citet{Krause14:Submodular} overview submodular optimization in general.
\citet{Singla15:Noisy} propose an algorithm for maximizing an unknown function, and \citet{Ashkan15:Optimal} introduce a greedy algorithm that optimally solves the problem of diversification if that diversity function is submodular and monotone. \citet{Radlinski08:Learning} learn a diverse ranking from behavior patterns of different users by using multiple MAB instances.  \citet{Yue11:Linear} introduce the linear submodular bandits problem to select diverse sets of content  while optimizing for a class of feature-rich submodular utility models.  Each of these papers uses submodularity to promote some notion of diversity. Using this as motivation, we empirically show that we can hire a diverse cohort of workers (Section~\ref{sec:experiments}).

\paragraph{Variable costs.}  In many real-world settings, there are different ways to gather information, each of which vary in cost and effectiveness. \citet{Ding13:Multi} looked at a regret minimization MAB problem in which, when an arm is pulled, a random reward is received and a random cost is taken from the budget. \citet{Xia16:Budgeted} extend this work to a batch arm pull setting. \citet{Jain14:Incentive} use MABs with variable rewards and costs to solve a crowdsourcing problem.  While we also assume non-unit costs and rewards, our setting is different than each of these, in that we actively choose how much to spend on each arm pull.

Interviews allow firms to compare applicants.  \emph{Structured interviews} treat each applicant the same by following the same questions and scoring strategy, allowing for meaningful cross-applicant comparison.  A substantial body of research shows that structured interviews serve as better predictors of job success and reduce bias across applicants when compared to traditional methods~\citep{Harris89:Reconsidering,Posthuma02:Beyond}.  As decision-making becomes more data-driven, firms look to demonstrate a link between hiring criteria and applicant success---and increasingly adopt structured interview processes~\citep{Kent16:Holistic,Levashina14:Structured}. 

Motivated by the structured interview paradigm, \citet{Schumann17:Diverse} introduced a concept of ``weak'' and ``strong'' pulls in the Strong Weak Arm Pull (SWAP) algorithm. SWAP probabilistically chooses to strongly or weakly pull an arm. Inspired by that work we associate pulls with a cost $j \geq 1$ and information gain $s \geq j$, where a pull receives a reward pulled from a distribution with a $\frac{\sigma}{\sqrt{s}}$-sub-Gaussian tail, but incurs cost $j$. Information gain $s$ relates to the confidence of accept/reject from an interview vs review. As stages get more expensive, the estimates of utility become more precise - the estimate comes with a distribution with a lower variance. In practice, a resume review may make a candidate seem much stronger than they are, or a badly written resume could severely underestimate their abilities. However, in-person interviews give better estimates. A strong arm pull with information gain $s$ is equivalent to $s$ weak pulls because it is equivalent to pulling from a distribution with a $\sigma/\sqrt{s}$ sub-Gaussian tail - it is equivalent to getting a (probably) closer estimate.
\citet{Schumann17:Diverse} only allow for two types of arm pulls and they do not account for the structure of current tiered hiring frameworks; nevertheless, in Section~\ref{sec:experiments}, we extend (as best we can) their model to our setting and compare as part of our experimental testbed.

\paragraph{Generalizing to multiple stages.}
This paper, to our knowledge, gives the first computational formalization of tiered structured interviewing.
We build on hiring models from the behavioral science literature~\citep{Vardarlier14:Modelling,Breaugh00:Research} in which the hiring process starts at recruitment and follows several stages, concluding with successful hiring. We model these $m$ successive stages as having an increased cost---in-person interviews cost more than phone interviews, which in turn cost more than simple r{\'e}sum{\'e} screenings---but return additional information via the score given to an applicant.
For each stage $i\in [m]$ the user defines a cost $j_i$ and an information gain $s_i$ for the type of pull (type of interview) being used in that stage. During each stage, $K_i$ arms move on to the next stage (we cut off $K_{i-1}-K_i$ arms), where $n=K_0>K_1>\cdots>K_{m-1}>K_m=K)$. The user must therefore define $K_i$ for each $i\in[m-1]$. The arms chosen to move on to the next stage are denoted as $A_m\subset A_{m-1} \subset \cdots \subset A_1 \subset A_0 = A$.

\paragraph{Tiered MAB and interviewing stages.}
Our formulation was initially motivated by the graduate admissions system run at our university. Here, at every stage, it is possible for \emph{multiple} independent reviewers to look at an applicant.  Indeed, our admissions committee strives to hit at least two written reviews per application package, before potentially considering one or more Skype/Hangouts calls with a potential applicant.  (In our data, for instance, some applicants received up to 6 independent reviews per stage.)

While motivated by academic admissions, we believe our model is of broad interest to industry as well.  For example, in the tech industry, it is common to allocate more (or fewer) 30-minute one-on-one interviews on a visit day, and/or multiple pre-visit programming screening teleconference calls.
Similarly, in management consulting~\cite{Hunt15:Diversity}, it is common to repeatedly give independent ``case study'' interviews to borderline candidates.

\section{Probably Approximately Correct Hiring}\label{sec:theory-pac}

In this section, we present \longcut, the first of two multi-stage algorithms for selecting a cohort of arms with provable guarantees.  \cut{} is a probably approximately correct (PAC)~\citep{Haussler93:Probably} algorithm that performs interviews over $m$ stages, for a user-supplied parameter $m$, before returning a final subset of $K$ arms.  

Algorithm~\ref{alg:conf_cut} provides pseudocode for \cut.  The algorithm requires several user-supplied parameters in addition to the standard PAC-style confidence parameters ($\delta$ - confidence probability, $\epsilon$ - error), including the total number of stages $m$; pairs $(s_i,j_i)$ for each stage $i\in[m]$ representing the information gain $s_i$ and cost $j_i$ associated with each arm pull; the number $K_i$ of arms to remain at the end of each stage $i\in[m]$; and a maximization oracle.   After each stage $i$ is complete, \cut\ removes all but $K_i$ arms.  The algorithm tracks these ``active'' arms, denoted by $A_{i-1}$ for each stage $i$, the total cost $\cost$ that accumulates over time when pulling arms, and per-arm $a$ information such as empirical utility $\hat{u}(a)$ and total information gain $T(a)$. For example, if arm $a$ has been pulled once in stage 1 and twice in stage 2, then $T(a)=s_1+2s_2$.

\begin{wrapfigure}[26]{L}{0.6\textwidth}
\vspace{-15pt}
\begin{minipage}{0.6\textwidth}
\begin{algorithm}[H]
\caption{{\small \longcut}}\label{alg:conf_cut}
\begin{algorithmic}[1]
\Require Confidence $\delta\ \in (0,1)$; Error $\epsilon \in (0,1)$; $\oracle$; number of stages $m$; $(s_i,j_i,K_i)$ for each stage $i$
\State $A_0 \gets A$ \label{alg:lin1}
\For{stage $i=1,\ldots,m$}
\State Pull each $a\in A_{i-1}$ once using the given $s_i,j_i$ pair\label{alg:lin3}
\State Update empirical means $\hat{\mathbf{u}}$
\State $\cost \gets \cost+K_{i-1} \cdot j_i$
\For{$t=1,2,\ldots$}
\State $A_i \gets \oracle_{K_i}(\hat{\mathbf{u}})$\label{alg:get_A}
\For {$a \in A_{i-1}$}
\State $\rad(a) \gets \sigma\sqrt{\frac{2\log(4|A|\cost^3)/\delta}{T(a)}}$\label{alg:rad}
\If {$a \in A_i$} $\tilde{u}(a) \gets \hat{u}(a) - \rad_t(a)$\label{alg:pess_start}
\Else {} $\tilde{u}(a) \gets \hat{u}(a) + \rad(a)$
\EndIf

\EndFor
\State $\tilde{A}_i \gets \oracle_{K_i}(\tilde{\mathbf{u}})$\label{alg:pess_end}
\If {$|w(\tilde{A}_i) - w(A_i)| < \epsilon$} \textbf{break}\label{alg:endloop}
\EndIf
\State $p \gets \arg\max_{a \in (\tilde{A}_i \setminus A_i) \cup (A_i \setminus \tilde{A}_i)}\rad(a)$\label{alg:findp}
\State Pull arm $p$ using the given $s_i, j_i$ pair
\State Update $\hat{u}(p)$ with the observed reward
\State $T(p) \gets T(p)+s_i$
\State $\cost{} \gets \cost{} + j_i$
\EndFor
\EndFor
\State $\texttt{Out} \gets A_m$; \Return $\texttt{Out}$ \label{alg:end}
\end{algorithmic}
\end{algorithm}
\end{minipage}
\end{wrapfigure}

\cut{} begins with all arms active (line~\ref{alg:lin1}). Each stage $i$ starts by pulling each active arm once using the given $(s_i,j_i)$ pair to initialize or update empirical utilities (line~\ref{alg:lin3}).  It then pulls arms until a confidence level is triggered, removes all but $K_i$ arms, and continues to the next stage (line~\ref{alg:endloop}).

In a stage $i$, \cut{} proceeds in rounds indexed by $t$.  In each round, the algorithm first finds a set $A_i$ of size $K_i$ using the maximization oracle and the current empirical means $\hat{u}$ (line~\ref{alg:get_A}). 
Then, given a confidence radius (line~\ref{alg:rad}), it computes pessimistic estimates $\tilde{u}(a)$ of the true utilities of each arm $a$ and uses the oracle to find a set of arms $\tilde{A}_i$ under these pessimistic assumptions (lines~\ref{alg:pess_start}-\ref{alg:pess_end}).  If those two sets are ``close enough'' ($\epsilon$ away), \cut{} proceeds to the next stage (line~\ref{alg:endloop}).
Otherwise, across all arms $a$ in the symmetric difference between $A_i$ and $\tilde{A}_i$, the arm $p$ with the most uncertainty over its true utility---determined via $\rad(a)$---is pulled (line~\ref{alg:findp}).
 At the end of the last stage $m$, \cut{} returns a final set of $K$ active arms that approximately maximizes an objective function (line~\ref{alg:end}).
 
We prove a bound on \cut\ in Theorem~\ref{thm:scuc}. As a special case of this theorem, when only a single stage of interviewing is desired, and as $\epsilon \to 0$, then Algorithm~\ref{alg:conf_cut} reduces to \citet{Chen14:Combinatorial}'s CLUCB, and our bound then reduces to their upper bound for CLUCB. This bound provides insights into the trade-offs of $\cost$, information gain $s$, problem hardness $\mathbf{H}$ (Equation~\ref{eq:hardness}), and shortlist size $K_i$. Given the $\cost$ and information gain $s$ parameters Theorem~\ref{thm:scuc} provides a tighter bound than those for CLUCB.

\begin{theorem}\label{thm:scuc}
Given any $\delta\in(0,1)$, any $\epsilon\in(0,1)$, any decision classes $\mathcal{M}_i\subseteq 2^{[n]}$ for each stage $i\in[m]$, any linear function $w$, and any expected rewards $u\in\mathbb{R}^n$, assume that the reward distribution $\varphi_a$ for each arm $a\in[n]$ has mean $u(a)$ with a $\sigma$-sub-Gaussian tail. Let $M^*_i=\argmax_{M\in \mathcal{M}_i}$ denote the optimal set in stage $i\in[m]$. Set $rad_t(a)=\sigma\sqrt{2\log(\frac{4K_{i-1}\cost_{i,t}^3}{\delta})/T_{i,t}(a)}$ for all $t>0$ and $a\in[n]$. Then, with probability at least $1-\delta$, the \cut\ algorithm (Algorithm~\ref{alg:conf_cut}) returns the set $\out$ where $w(\out)-w(M_m^*) < \epsilon$ and
{\small
\begin{align*}
T \leq & O\left(\sigma^2\sum_{i\in[m]}\left( \frac{j_i}{s_i}
\left(\sum_{a\in A_{i-1}} \min\left\{\frac{1}{\Delta_a^2},\frac{K_i^2}{\epsilon^2}\right\}\right)%
\log\left(\frac{\sigma^2j_i^4}{s_i\delta}\sum_{a\in A_{i-1}} \min\left\{\frac{1}{\Delta_a^2},\frac{K_i^2}{\epsilon^2}\right\}\right) \right)\right).
\end{align*}}
\end{theorem}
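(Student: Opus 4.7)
The plan is to adapt the CLUCB analysis of \citet{Chen14:Combinatorial} to the multi-stage setting with variable information gains $s_i$ and costs $j_i$, and to accommodate the $\epsilon$-approximation slack introduced by the stopping rule on line~\ref{alg:endloop}. The argument naturally decomposes into defining a high-probability confidence event, establishing correctness per stage under that event, and then bounding cost per stage by inverting the confidence radius at the last pull of each arm.

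First I would define the ``nice'' event $\xi = \{\forall i\in[m],\ \forall a\in A_{i-1},\ \forall t\geq 1 : |\hat{u}_t(a) - u(a)| \leq \rad_t(a)\}$. Because a pull in stage $i$ has a $\sigma/\sqrt{s_i}$-sub-Gaussian tail, the effective variance proxy after cumulative information gain $T(a)$ is $\sigma^2/T(a)$, matching the form of $\rad$ on line~\ref{alg:rad}. A Hoeffding bound together with a union bound over rounds (the choice of $\log(4|A|\cost^3/\delta)$ is precisely what makes $\sum_t \cost_t^{-2}$ summable, as in \citet{Chen14:Combinatorial}) gives $\Pr[\xi]\geq 1-\delta$.

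Second, assuming $\xi$ holds, I would prove correctness stage by stage. When the stopping condition triggers in stage $i$, we have $|w(\tilde{A}_i) - w(A_i)| < \epsilon$. Since $\tilde{u}$ is pessimistic for arms in $A_i$ and optimistic for arms not in $A_i$, an exchange argument identical to Lemma~4 of \citet{Chen14:Combinatorial} yields $w(\hat{u},\tilde{A}_i) \geq w(\hat{u},M_i^*)$, and concentration under $\xi$ turns this into $w(u,A_i)\geq w(u,M_i^*) - \epsilon$. Carrying this $\epsilon$-approximation through the chain of stages gives $w(\out)\geq w(M_m^*)-\epsilon$.

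Third, for the cost complexity inside stage $i$, consider any round that does \emph{not} stop. Let $p$ be the arm of largest $\rad$ in $A_i\triangle\tilde{A}_i$. A pigeonhole argument on the symmetric difference (size at most $2K_i$) together with $|w(\tilde{A}_i)-w(A_i)|\geq \epsilon$ and the optimality of $A_i$ for $\hat{u}$ shows $\rad(p)\geq \Omega(\min\{\Delta_p,\,\epsilon/K_i\})$ at the \emph{last} time $p$ is pulled in stage $i$. Inverting line~\ref{alg:rad} gives $T(p)\leq O\!\left(\sigma^2\log(4|A|\cost^3/\delta)\cdot\min\{\Delta_p^{-2},\,K_i^2/\epsilon^2\}\right)$. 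Because each pull in stage $i$ contributes $s_i$ to $T(p)$ but costs $j_i$, the cost attributable to $p$ is $(j_i/s_i)\cdot T(p)$. Summing over $p\in A_{i-1}$ and over $i\in[m]$ yields the claimed bound.

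The main obstacle will be resolving the logarithmic factor cleanly: $\rad$ depends on $\cost$, which in turn depends on the total number of rounds, so direct substitution is circular. I expect to have to bound $\cost$ by a self-bounding argument, plugging a loose upper estimate into the log, deriving a refined bound, and iterating once (the standard $\log\log$ juggling used by \citet{Chen14:Combinatorial} and in PAC-MAB analyses). A secondary subtlety is the $\min\{\Delta_a^{-2},K_i^2/\epsilon^2\}$ form: the first term is the usual CLUCB hardness, while the second arises precisely when the true gap is smaller than $\epsilon/K_i$, in which case the stopping rule fires before the algorithm can resolve the gap, and so the per-arm radius need only shrink to $\epsilon/K_i$ rather than to $\Delta_a$. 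Getting the constants in the pigeonhole step right so that both cases combine into a single $\min$ is the delicate part that distinguishes this result from the pure-CLUCB bound it strictly generalizes.
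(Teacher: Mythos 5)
Your proposal follows essentially the same route as the paper's proof: a per-arm inversion of the confidence radius at the last pull using the lower bound $\rad_{i,t}(a)\geq\max\{\Delta_a/6,\,\epsilon/(2K_i)\}$ (which the paper imports directly as Lemma~13 of \citet{Chen14:Combinatorial} rather than re-deriving via your pigeonhole sketch), conversion of cumulative information gain to cost through the factor $j_i/s_i$, and resolution of the $\log(\cost)$ circularity by the same self-bounding contradiction argument, summed over stages. No substantive differences to report.
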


 \begin{wrapfigure}[12]{r}{0.38\textwidth}
\vspace{-.2cm}
  \centering
  \includegraphics[width=0.35\textwidth]{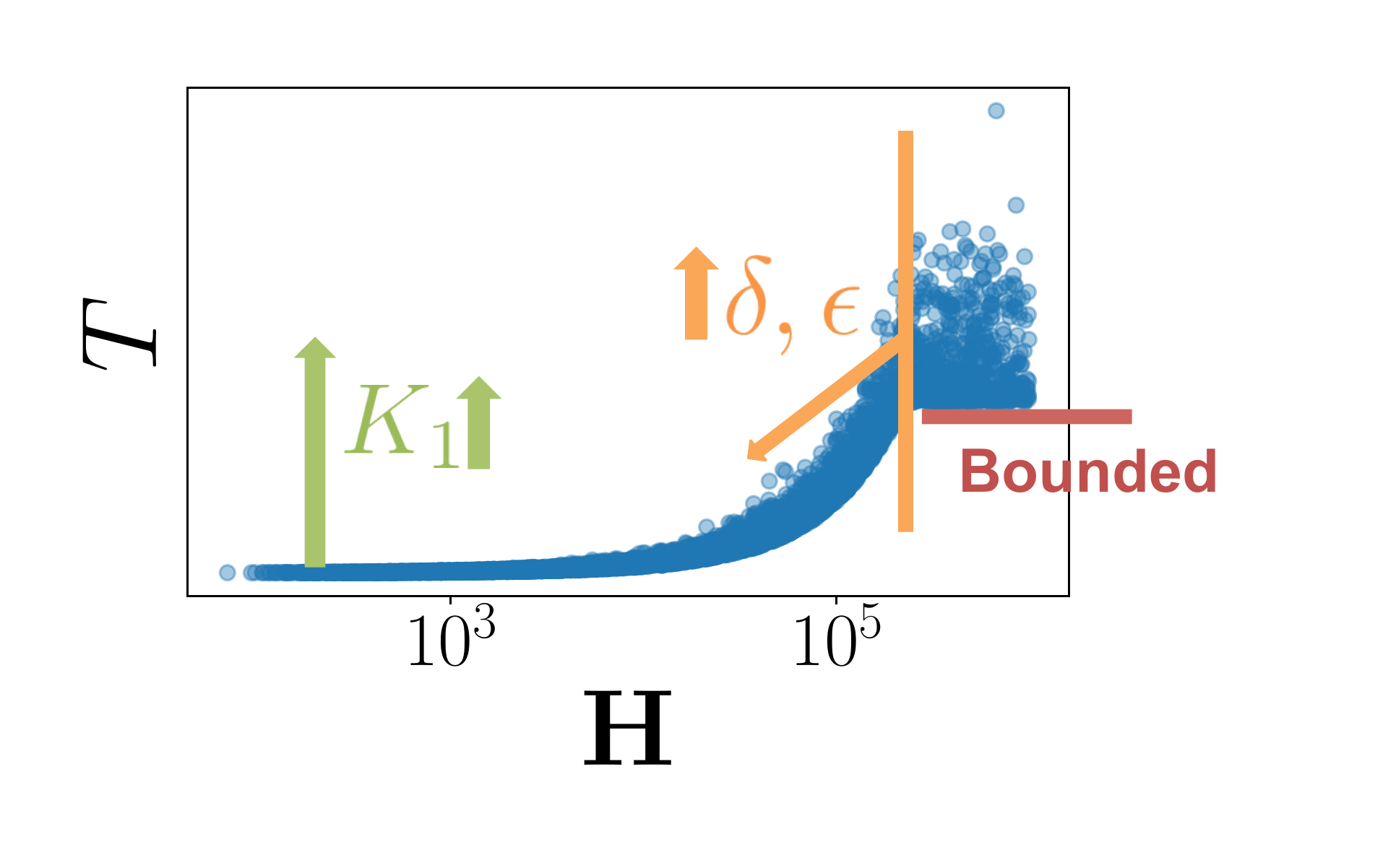}
  \vspace{-10pt}
  \caption{Hardness ($\mathbf{H}$) vs theoretical cost ($T$) as user-specified parameters to the \cut{} algorithm change.}
  \label{fig:cut_theoretical_hardness}
\end{wrapfigure}

Theorem~\ref{thm:scuc} gives a bound relative to problem-specific parameters such as the gap scores $\Delta_a$ (Equation~\ref{eq:gap_score}), inter-stage cohort sizes $K_i$, and so on.  Figure~\ref{fig:cut_theoretical_hardness}\footnote{For detailed figures see Appendix~\ref{app:extra-results}} lends intuition as to how \cut\ changes with respect to these inputs, in terms of problem hardness (defined in Eq.~\ref{eq:hardness}). When a problem is easy (gap scores $\Delta_a$ are large and hardness $\mathbf{H}$ becomes small), the $\min$ parts of the bound are dominated by gap scores $\Delta_a$, and there is a smooth increase in total cost. When the problem gets harder (gap scores $\Delta_a$ are small and hardness $\mathbf{H}$ becomes large), the $\min$s are dominated by $K_i^2 / \epsilon^2$ and the cost is noisy but bounded below.  When $\epsilon$ or $\delta$ increases, the lower bounds of the noisy section decrease---with the impact of $\epsilon$ dominating that of $\delta$. 
A policymaker can use these high-level trade-offs to determine hiring mechanism parameters.  For example, assume there are two interview stages. As the number $K_1$ of applicants who pass the first interview stage increases, so too does total cost $T$. However, if $K_1$ is too small (here, very close to the final cohort size $K$), then the cost also increases.

\section{Hiring on a Fixed Budget with \brutas{}}\label{sec:theory-budget}

In many hiring situations, a firm or committee has a fixed budget for hiring (number of phone interviews, total dollars to spend on hosting, and so on).  With that in mind, in this section, we present \longbrutas{}, a tiered-interviewing algorithm in the fixed-budget setting.  

Algorithm~\ref{alg:bug_cut} provides pseudocode for \brutas, which takes as input fixed budgets $\bar{T}_i$ for each stage $i \in [m]$, where $\sum_{i\in[m]}\bar{T}_i=\bar{T}$, the total budget. In this version of the tiered-interview problem, we also know how many decisions---whether to accept or reject an arm---we need to make in each stage. This is slightly different than in the \cut{} setting (\S\ref{sec:theory-pac}), where we need to remove all but $K_i$ arms at the conclusion of each stage $i$. We make this change to align with the CSAR setting of~\citet{Chen14:Combinatorial}, which \brutas{} generalizes. In this setting, let $\tilde{K}_i$ represent how many decisions we need to make at stage $i\in[m]$; thus, $\sum_{i\in[m]}\tilde{K}_i=n$. The $\tilde{K}_i$s are independent of $K$, the final number of arms we want to accept, except that the total number of accept decisions across all $\tilde{K}$ must sum to $K$.

The budgeted setting uses a constrained oracle 
$\coracle:\mathbb{R}^n\times2^{[n]}\times2^{[n]}\rightarrow\mathcal{M}\cup\{\perp\}$ defined as
{\small
\begin{equation*}
\coracle(\hat{\mathbf{u}},A,B) =  \argmax_{\{M\in\mathcal{M_K} \ |\ A\subseteq M \ \wedge \ B\cap M=\emptyset\}}w(\hat{\mathbf{u}},M),
\end{equation*}%
}%
where $A$ is the set of arms that have been accepted and $B$ is the set of arms that have been rejected.

In each stage $i\in[m]$, \brutas\ starts by collecting the accept and reject sets from the previous stage. It then proceeds through $\tilde{K}_i$ rounds, indexed by $t$, and selects a single arm to place in the accept set $A$ or the reject set $B$. In a round $t$, it first pulls each \emph{active} arm---arms not in $A$ or $B$---a total of $\tilde{T}_{i,t}-\tilde{T}_{i,t-1}$ times using the appropriate $s_i$ and $j_i$ values. $\tilde{T}_{i,t}$ is set according to Line~\ref{ln:alg2_1}; note that $\tilde{T}_{i,0}=0$. Once all the empirical means for each active arm have been updated, the constrained oracle is run to find the empirical best set $M_{i,t}$ (Line~\ref{alg:emp_set}). For each active arm $a$, a new pessimistic set $\tilde{M}_{i,t,a}$ is found (Lines~\ref{alg:bts_pess_start}-\ref{alg:bts_pess_end}). $a$ is placed in the accept set $A$ if $a$ is not in $M_{i,t}$, or in the reject set $B$ if $a$ is in $M_{i,t}$. This is done to calculate the gap that arm $a$ creates (Equation~\ref{eq:gap_score}). The arm $p_{i,t}$ with the largest gap is selected and placed in the accept set $A$ if $p_{i,t}$ was included in $M_{i,t}$, or placed in the reject set $B$ otherwise (Lines~\ref{alg:AR_start}-\ref{alg:AR_end}). Once all rounds are complete, the final accept set $A$ is returned.

Theorem~\ref{thm:bcut}, provides an lower bound on the confidence that \brutas\ returns the optimal set. Note that if there is only a single stage, then Algorithm~\ref{alg:bug_cut} reduces to \citet{Chen14:Combinatorial}'s CSAR algorithm, and our Theorem~\ref{thm:bcut} reduces to their upper bound for CSAR. Again Theorem~\ref{thm:bcut} provides tighter bounds than those for CSAR given the parameters for information gain $s_b$ and arm pull cost $j_b$.

\begin{theorem}\label{thm:bcut}
Given any $\bar{T}_i$s such that $\sum_{i\in[m]}\bar{T}_i=\bar{T} > n$, any decision class $\mathcal{M}_K\subseteq2^{[n]}$, any linear function $w$, and any true expected rewards $\mathbf{u}\in\mathbb{R}^n$, assume that reward distribution $\varphi_a$ for each arm $a\in[n]$ has mean $u(a)$ with a $\sigma$-sub-Gaussian tail. Let $\Delta_{(1)},\ldots,\Delta_{(n)}$ be a permutation of $\Delta_1,\ldots,\Delta_n$ (defined in Eq. \ref{eq:gap_score}) such that $\Delta_{(1)}\leq\ldots\leq\Delta_{(n)}$. Define $\tilde{\mathbf{H}}\triangleq\max_{i\in[n]}i\Delta_{(i)}^{-2}$. Then, Algorithm~\ref{alg:bug_cut} uses at most $\bar{T}_i$ samples per stage $i\in[m]$ and outputs a solution $\out\in\mathcal{M}_K\cup\{\perp\}$ such that
{\small
\begin{equation}
\Pr[\out\neq M_*] \leq  n^2\exp\left(-\frac{\sum_{b=1}^m s_b(\bar{T}_b-\tilde{K}_b)/(j_b\tlog(\tilde{K}_b))}{72\sigma^2\tilde{\mathbf{H}}}\right)
\end{equation}}%
where $\tlog(n)\triangleq\sum_{i=1}^ni^{-1}$, and $M_*=\argmax_{M\in\mathcal{M}_K}w(M)$.
\end{theorem}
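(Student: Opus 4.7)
The plan is to follow the proof template of the CSAR theorem of \citet{Chen14:Combinatorial}, adapted to our multi-stage setting with per-stage cost $j_i$ and information gain $s_i$. The fundamental idea is that on a ``good event'' where every empirical mean concentrates within a round-specific radius of its true value, the \coracle-driven swap test in lines~\ref{alg:bts_pess_start}--\ref{alg:bts_pess_end} correctly classifies (accept vs.\ reject) the arm $p_{i,t}$ chosen in each round, so that \brutas{} terminates with $\out=M_*$.

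First I would define the per-round radius. In round $t$ of stage $i$, each still-active arm has been pulled $\tilde T_{i,t}$ times with gain $s_i$, and possibly additional times in earlier stages. Since a pull with gain $s_i$ yields a $\sigma/\sqrt{s_i}$-sub-Gaussian sample, the effective sample count is at least $s_i\tilde T_{i,t}$, so I set a confidence radius of order $\sigma/\sqrt{s_i\tilde T_{i,t}}$ (earlier-stage contributions only tighten it). A Hoeffding-style union bound over the $n$ arms, $m$ stages, and $\tilde K_i$ rounds then controls the failure probability of the good event. Conditioned on it, I reuse the CLUCB exchange argument from \citet{Chen14:Combinatorial} stage by stage: the arm $p_{i,t}$ maximizing the empirical gap is decided correctly whenever its true gap $\Delta_{p_{i,t}}$ exceeds twice the current radius, which I argue holds as long as $\tilde T_{i,t}$ is large enough.

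The key technical step is inverting the stage budget. Setting $\tilde T_{i,t}=\bigl\lceil(\bar T_i-\tilde K_i)/\bigl(j_i\,\tlog(\tilde K_i)\,(\tilde K_i-t+1)\bigr)\bigr\rceil$ satisfies $\sum_{t=1}^{\tilde K_i}(\tilde K_i-t+1)\,j_i\,(\tilde T_{i,t}-\tilde T_{i,t-1})\le\bar T_i$, so stage $i$ respects its budget. Sorting the gaps of arms decided within stage $i$ and using $\tilde{\mathbf H}=\max_{i}i\Delta_{(i)}^{-2}$ to upper-bound $\Delta_{(t)}^{-2}$ by $\tilde{\mathbf H}/t$, the exponent controlling the round-$t$ failure becomes
\begin{equation*}
\exp\!\left(-\frac{\Delta_{(t)}^2 s_i\tilde T_{i,t}}{2\sigma^2}\right)\le\exp\!\left(-\frac{s_i(\bar T_i-\tilde K_i)}{72\sigma^2 j_i\,\tlog(\tilde K_i)\,\tilde{\mathbf H}}\right),
\end{equation*}
a bound that is uniform over $t$ within stage $i$. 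Combining stages inside a single exponent (since all rounds must succeed simultaneously, I lower-bound the min of per-round exponents by the sum, or equivalently split the good event across stages and multiply out) and absorbing the union bound over arms and rounds into the $n^2$ prefactor yields the stated inequality.

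The main obstacle will be the bookkeeping in the third step: pairing each round $t$ of each stage $i$ with the appropriate sorted gap $\Delta_{(\cdot)}$, while correctly carrying forward information from earlier stages and ensuring the per-stage budget $\bar T_i$ (rather than the total $\bar T$) is the quantity that appears in the exponent. Getting the $\tlog(\tilde K_i)$ factor to emerge cleanly from the harmonic sum over rounds, and verifying that the degenerate cases where $\coracle$ returns $\perp$ are absorbed into the failure event, are the places where the argument has to be executed carefully rather than quoted from \citet{Chen14:Combinatorial}.
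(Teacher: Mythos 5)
Your overall route is the same as the paper's: a CSAR-style ``good event'' established by Hoeffding's inequality plus a union bound, the exchange argument of \citet{Chen14:Combinatorial} conditioned on that event, and the harmonic-sum inversion of $\tilde T_{i,t}$ to verify the per-stage budget. The genuine gap is in how you combine stages. You set the round-$(i,t)$ radius to order $\sigma/\sqrt{s_i\tilde T_{i,t}}$ and explicitly discard the earlier-stage pulls (``earlier-stage contributions only tighten it''), which makes each round's failure exponent depend only on stage $i$'s own budget, i.e.\ on $E_i=s_i(\bar T_i-\tilde K_i)/(72\sigma^2 j_i\tlog(\tilde K_i)\tilde{\mathbf{H}})$. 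A union bound over all rounds then yields at best $n^2\exp(-\min_b E_b)$, and since $\min_b E_b\le\sum_b E_b$ for nonnegative $E_b$, we have $\exp(-\min_b E_b)\ge\exp(-\sum_b E_b)$: the inequality points the wrong way, so you cannot ``lower-bound the min by the sum,'' and probabilities of failure events in a union bound do not multiply, so ``splitting the good event across stages and multiplying out'' is not available either. As written, your plan proves only the weaker version of the theorem with $\min_b$ in place of $\sum_b$ in the exponent.

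The paper's Lemma~\ref{lem:tau} gets the sum into the exponent precisely through the term you discarded: at stage $i$, round $t$, an active arm's effective sample count is the cumulative $s_i\tilde T_{i,t}+\sum_{b=1}^{i-1}s_b\tilde T_{b,\tilde K_b+1}$, so the Hoeffding exponent for that single round already contains $\sum_{b=1}^{i}s_b(\bar T_b-\tilde K_b)/(j_b\tlog(\tilde K_b))$ after applying $\Delta_{(r)}^{-2}\le\tilde{\mathbf{H}}/r$ with $r=n-\sum_{b<i}\tilde K_b-t+1$ the number of still-active arms (note also that the gaps are sorted globally over all $n$ arms and indexed by the number of remaining arms, not re-sorted within each stage as your sketch suggests). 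To match the stated bound you must carry that cumulative information forward. Even then, be aware that early-stage rounds only carry the partial sums $\sum_{b\le i}$, so the union bound is actually dominated by the stage-$1$ terms; the paper's final inequality in Lemma~\ref{lem:tau} glosses over this, but the cumulative sample count is the intended mechanism for the summed exponent and is exactly the step your proposal is missing.
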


\begin{wrapfigure}[30]{L}{0.65\textwidth}
\vspace{-30pt}
\begin{minipage}{0.65\textwidth}
\begin{algorithm}[H]
\caption{{\small\longbrutas}}\label{alg:bug_cut}
\begin{algorithmic}[1]
\Require Budgets $\bar{T}_i\ \forall i\in[m]$; ($s_i,j_i,\tilde{K}_i)$ for each stage $i$; constrained oracle $\coracle$
\State Define $\tlog(n)\triangleq\sum_{i=1}^{n}\frac{1}{i}$
\State $A_{0,1}\gets\varnothing$; $B_{0,1}\gets\varnothing$
\For{stage $i=1,\ldots,m$}
\State $A_{i,1}\!\gets\! A_{i-1,\tilde{K}_{i-1}+1}$; $B_{i,1}\!\gets\! B_{i-1,\tilde{K}_{i-1}+1}$; $\tilde{T}_{i,0}\!\gets\!0$
\For{$t=1,\ldots,\tilde{K}_i$}
\State $\tilde{T}_{i,t}\gets\left\lceil\frac{\bar{T}_i-\left(n-\sum_{a=0}^{i-1}\widetilde{K}_{i}\right)}{\tlog\left(n-\sum_{a=0}^{i-1}\widetilde{K}_{i}\right)j_i(\widetilde{K}_{i}-t+1)}\right\rceil$\label{ln:alg2_1}
\ForAll{$a\in [n] \setminus (A_{i,t} \cup B_{i,t})$}
\State Pull $a$\ \ $(\tilde{T}_{i,t}-\tilde{T}_{i,t-1})$ times; update $\hat{\mathbf{u}}_{i,t}(a)$
\EndFor
\State $M_{i,t} \gets \coracle(\hat{\mathbf{u}}_{i,t},A_{i,t},B_{i,t})$\label{alg:emp_set}
\If {$M_{i,t}=\perp$} \Return $\perp$ \EndIf
\ForAll{$a\in [n]\setminus(A_{i,t}\cup B_{i,t})$}\label{alg:bts_pess_start}
\If {$a\in M_{i,t}$}
\State $\tilde{M}_{i,t,a}\!\gets\!\coracle(\hat{\mathbf{w}}_{i,t},A_{i,t},B_{i,t}\!\cup\!\{a\})$
\Else {}
\State $\tilde{M}_{i,t,a}\!\gets\!\coracle(\hat{\mathbf{w}}_{i,t},A_{i,t}\!\cup\!\{a\},B_{i,t})$
\EndIf
\EndFor\label{alg:bts_pess_end}
\State $\displaystyle p_{i,t} \gets \argmax_{a\in[n]\setminus(A_{i,t}\cup B_{i,t})}\!w(M_{i,t})-w(\tilde{M}_{i,t,a})$\label{alg:AR_start}
\If {$p_{i,t}\in M_t$}
\State $A_{i,t+1}\gets A_{i,t}\cup\{p_{i,t}\}$; $B_{i,t+1}\gets B_{i,t}$
\Else
\State $A_{i,t+1}\gets A_{i,t}$; $B_{i,t+1}\gets B_{i,t}\cup\{p_{i,t}\}$\label{alg:AR_end}
\EndIf
\EndFor
\EndFor
\State $\out \gets A_{m,\tilde{K}_m+1}$; \Return $\out$
\end{algorithmic}
\end{algorithm}
\end{minipage}
\end{wrapfigure}

When setting the budget for each stage, a policymaker should ensure there is sufficient budget for the number of arms in each stage $i$, and for the given exogenous cost values $j_i$ associated with interviewing at that stage.
There is also a balance between the number of decisions that must be made in a given stage $i$ and the ratio $\frac{s_i}{j_i}$ of interview information gain and cost. Intuitively, giving higher budget to stages with a higher $\frac{s_i}{j_i}$ ratio makes sense---but one also would not want to make \emph{all} accept/reject decisions in those stages, since more decisions corresponds to lower confidence. Generally, arms with high gap scores $\Delta_a$ are accepted/rejected in the earlier stages, while arms with low gap scores $\Delta_a$ are accepted/rejected in the later stages. The policy maker should look at past decisions to estimate gap scores $\Delta_a$ (Equation~\ref{eq:gap_score}) and hardness $\mathbf{H}$ (Equation~\ref{eq:hardness}). There is a clear trade-off between information gain and cost. If the policy maker assumes (based on past data) that the gap scores will be high (it is easy to differentiate between applicants) then the lower stages should have a high $K_i$, and a budget to match the relevant cost $j_i$. If the gap scores are all low (it is hard to differentiate between applicants) then more decisions should be made in the higher, more expensive stages. By looking at the ratio of small gap scores to high gap scores, or by bucketing gap scores, a policy maker will be able to set each $K_i$.

\begin{figure*}[!ht]
  \centering
  \includegraphics[width=\textwidth]{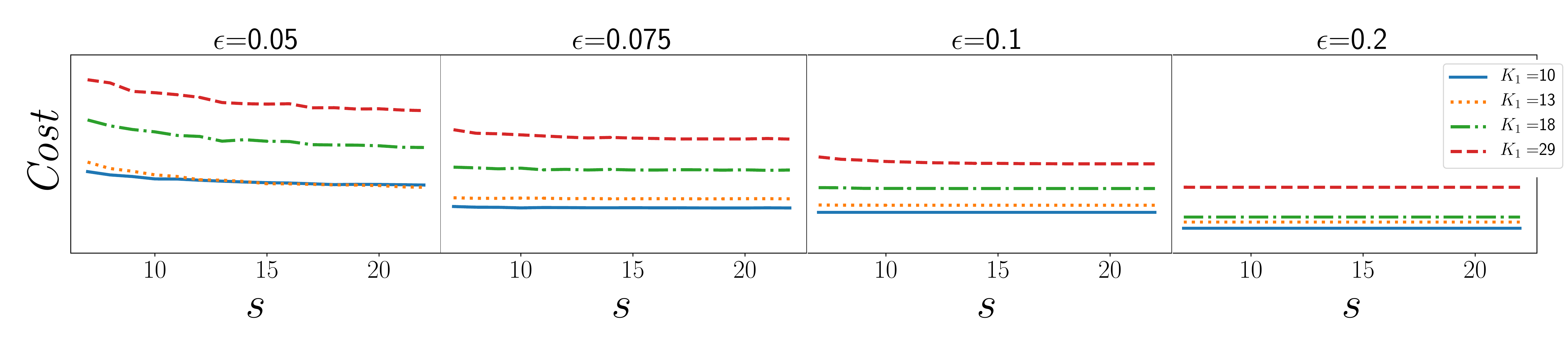}
  \vspace{-20pt}
  \caption{Comparison of $\cost$ vs information gain ($s$) as $\epsilon$ increases for \cut. Here, $\delta=0.05$ and $\sigma=0.2$. As $\epsilon$ increases, the cost of the algorithm also decreases. If the overall cost of the algorithm is low, then increasing $s$ (while keeping $j$ constant) provides diminishing returns.}
  \vspace{-5pt}
  \label{fig:cut_eps}
\end{figure*}

\section{Experiments}\label{sec:experiments}
In this section, we experimentally evaluate \brutas{} and \cut{} in two different settings. The first setting uses data from a toy problem of Gaussian distributed arms. The second setting uses real admissions data from one of the largest US-based graduate computer science programs.

\subsection{Gaussian Arm Experiments}\label{sec:gaussian}

\begin{wrapfigure}[8]{r}{0.3\textwidth}
  \vspace{-30pt}  %
  \centering
  \includegraphics[width=0.29\textwidth]{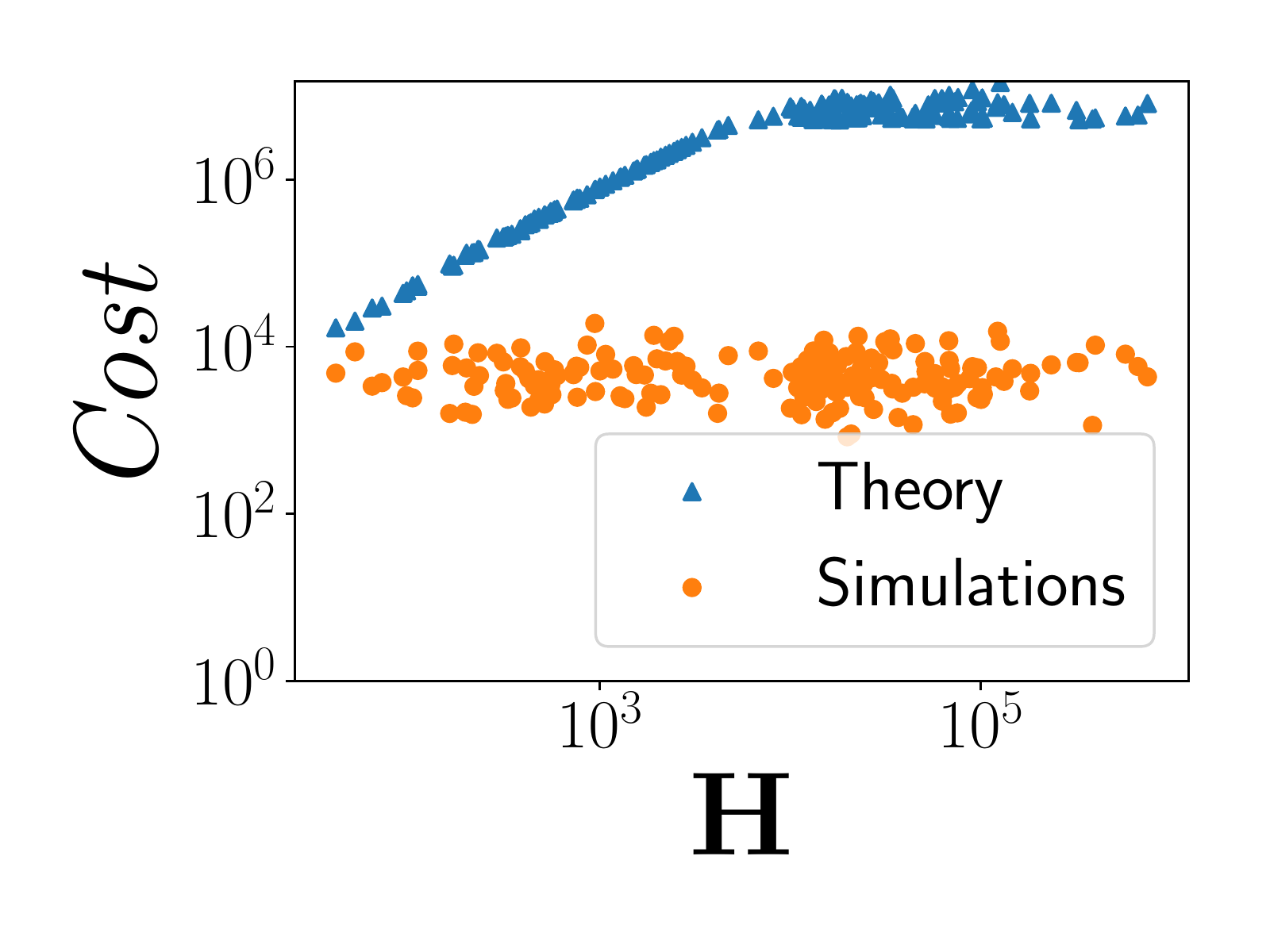}
  \vspace{-10pt}
  \caption{Hardness ($\mathbf{H}$) vs Cost, comparing against Theorem~\ref{thm:scuc}.}
  \vspace{-10pt}
  \label{fig:cut_hardness}
\end{wrapfigure}

We begin by using simulated data to test the tightness of our theoretical bounds.  To do so, we instantiate a cohort of $n=50$ arms whose true utilities, $u_a$, are sampled from a normal distribution. We aim to select a final cohort of size $K=7$. When an arm is pulled during a stage with cost $j$ and information gain $s$, the algorithm is charged a cost of $j$ and a reward is pulled from a distribution with mean $u_a$ and standard deviation of $\sigma/\sqrt{s}$. For simplicity, we present results in the setting of $m=2$ stages. 

\paragraph{\cut.}
To evaluate \cut, we vary $\delta$, $\epsilon$, $\sigma$, $K_1$, and $s_2$. We find that as $\delta$ increases, both cost and utility decrease, as expected. 
Similarly, Figure~\ref{fig:cut_eps} shows that as $\epsilon$ increases, both cost and utility decrease. Higher values of $\sigma$ increase the total cost, but do not affect utility. We also find diminishing returns from high information gain $s$ values ($x$-axis of Figure~\ref{fig:cut_eps}). This makes sense---as $s$ tends to infinity, the true utility is returned from a single arm pull. We also notice that if many ``easy'' arms (arms with very large gap scores) are allowed in higher stages, total cost rises substantially.

Although the bound defined in Theorem \ref{thm:scuc} assumes a linear function $w$, we empirically tested \cut{} using a submodular function $\wdiverse{}$. We find that the cost of running \cut\ using this submodular function is significantly lower than the theoretical bound. This suggests that (i) the bound for \cut{} can be tightened and (ii) \cut{} could be run with submodular functions $w$.

\begin{wrapfigure}[11]{r}{0.3\textwidth}
  \centering
  \vspace{-20pt}
  \includegraphics[width=0.28\textwidth]{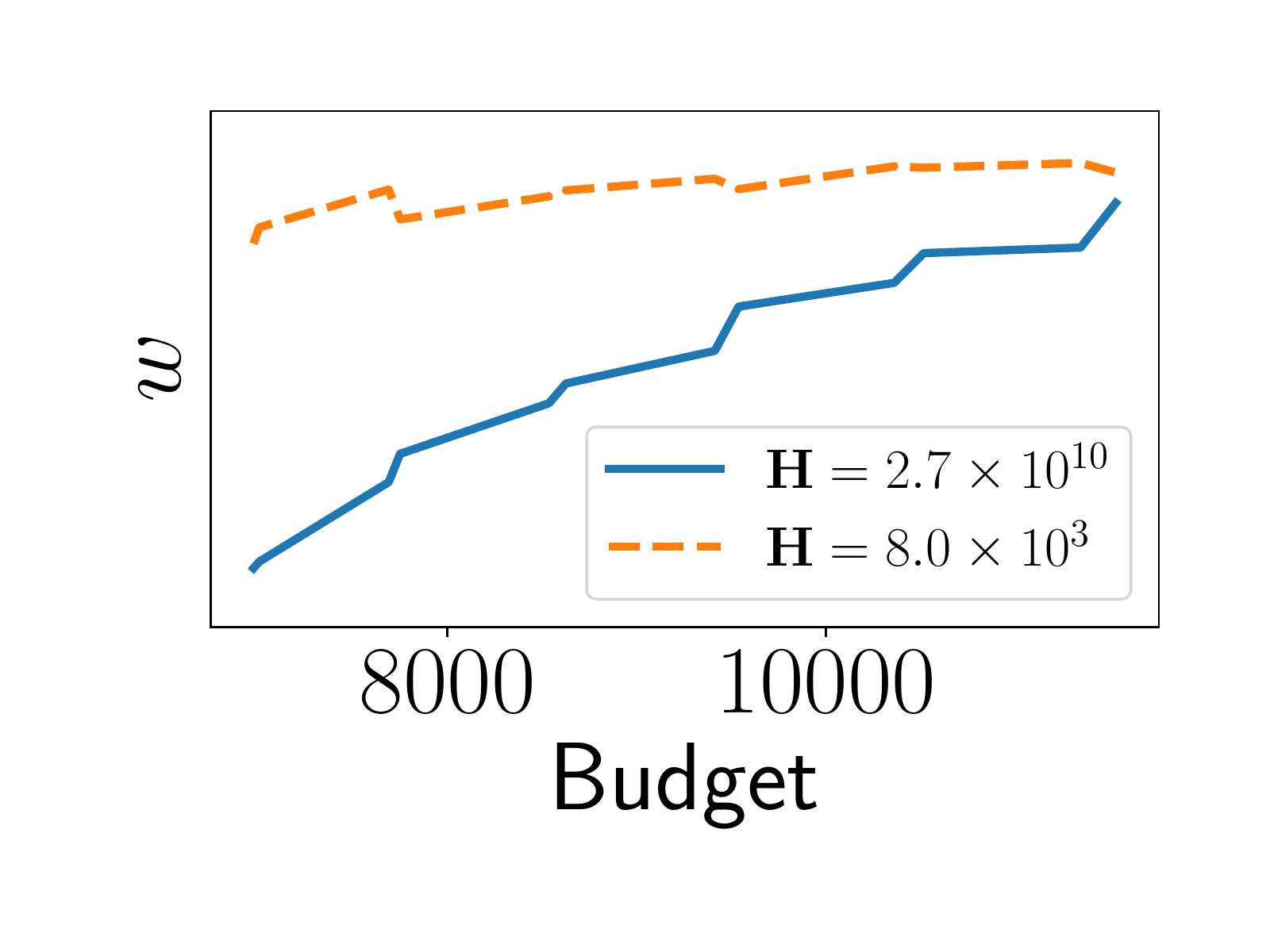}
  \vspace{-10pt}
  \caption{Effect of an increasing budget on the overall utility of a cohort. As hardness ($\mathbf{H}$) increases, more budget is needed to produce a high quality cohort.}
  \label{fig:cutAR_hardness}
\end{wrapfigure}

\paragraph{\brutas{}.}
To evaluate \brutas, we varied $\sigma$ and ($\tilde{K}_i$, $T_i$) pairs for two stages. Utility varies as expected from Theorem~\ref{thm:bcut}: when $\sigma$ increases, utility decreases. There is also a trade-off between $\tilde{K}_i$ and $T_i$ values. If the problem is easy, a low budget and a high $\tilde{K}_1$ value is sufficient to get high utility. If the problem is hard (high $\mathbf{H}$ value), a higher overall budget is needed, with more budget spent in the second stage. Figure~\ref{fig:cutAR_hardness} shows this escalating relationship between budget and utility based on problem hardness. Again we found that \brutas{} performed well when using a submodular function $\wdiverse$.

Finally, we compare \cut{} and \brutas{} to two baseline algorithms: \unifalg{} and \randalg{}, which uniformly and randomly respectively, pulls arms in each stage. In both algorithms, the maximization oracle is run after each stage to determine which arms should move on to the next stage. When given a budget of \num{2750}, \brutas\ achieves a utility of \num{244.0}, which outperforms both the \unifalg{} and \randalg{} baseline utilities of \num{178.4} and \num{138.9}, respectively. When \cut\ is run on the same problem, it finds a solution (utility of \num{231.0}) that beats both \unifalg{} and \randalg{} at a roughly equivalent cost of \num{2609}.  This qualitative behavior exists for other budgets.

\subsection{Graduate Admissions Experiment}\label{sec:grad-admin}

We evaluate how \cut{} and \brutas{} might perform in the real world by applying them to a graduate admissions dataset from one of the largest US-based graduate computer science programs. These experiments were approved by the university's Institutional Review Board and did not affect any admissions decisions for the university. Our dataset consists of three years (2014--16) worth of graduate applications. For each application we also have graduate committee review scores (normalized to between $0$ and $1$) and admission decisions. 

\paragraph{Experimental setup.} Using information from  2014 and 2015, we used a random forest classifier~\citep{Pedregosa11:Scikit}, trained in the standard way on features extracted from the applications, to predict probability of acceptance. Features included numerical information such as GPA and GRE scores, topics from running Latent Dirichlet Allocation (LDA) on faculty recommendation letters~\citep{Schmader07:Linguistic}, and categorical information such as region of origin and undergraduate school. In the testing phase, the classifier was run on the set of applicants $A$ from 2016 to produce a probability of acceptance $P(a)$ for every applicant $a\in A$.

We mimic the university's application process of two stages: a first review stage where admissions committee members review the application packet, and a second interview stage where committee members perform a Skype interview for a select subset of applicants. The committee members follow a structured interview approach. We determined that the time taken for a Skype interview is roughly $6$ times as long as a packet review, and therefore we set the cost multiplier for the second stage $j_2=6$. We ran over a variety of $s_2$ values, and we determined $\sigma$ by looking at the distribution of review scores from past years.  When an arm $a\in A$ is pulled with information gain $s$ and cost $j$, a reward is randomly pulled from the arm's review scores (when $s_1=1$ and $j_1=1$, as in the first stage), or a reward is pulled from a Gaussian distribution with mean $P(a)$ and a standard deviation of $\frac{\sigma}{\sqrt{s}}$.

We ran simulations for \brutas{}, \cut{}, \unifalg{}, and \randalg{}. In addition we compare to an adjusted version of \citet{Schumann17:Diverse}'s \swapalg. \swapalg{} uses a strong pull policy to probabilistically weak or strong pull arms. In this adjusted version we use a strong pull policy of always weak pulling arms until some threshold time $t$ and strong pulling for the remainder of the algorithm. Note that this adjustment moves \swapalg{} away from fixed confidence but not all the way to a budgeted algorithm like \brutas{} but fits into the tiered structure.  For the budgeted algorithms \brutas, \unifalg{}, and \randalg{}, (as well as the pseudo-budgeted \swapalg{}) if there are $K_i$ arms in round $i$, the budget is $K_i\cdot x_i$ where $x_i\in \mathbb{N}$. We vary $\delta$ and $\epsilon$ to control \cut's cost. 

We compare the utility of the cohort selected by each of the algorithms to the utility from the cohort that was actually selected by the university. We maximize either objective $\wtop$ or $\wdiverse$ for each of the algorithms.
We instantiate $\wdiverse$, defined in Equation~\ref{eq:diversity}, in two ways:
first, with self-reported gender,
and second, with region of origin. 
Note that since the graduate admissions process is run entirely by humans, the committee does not explicitly maximize a particular function. Instead, the committee tries to find a good overall cohort while balancing areas of interest and general diversity. 

\begin{figure}
    \centering
    \includegraphics[width=0.27\columnwidth]{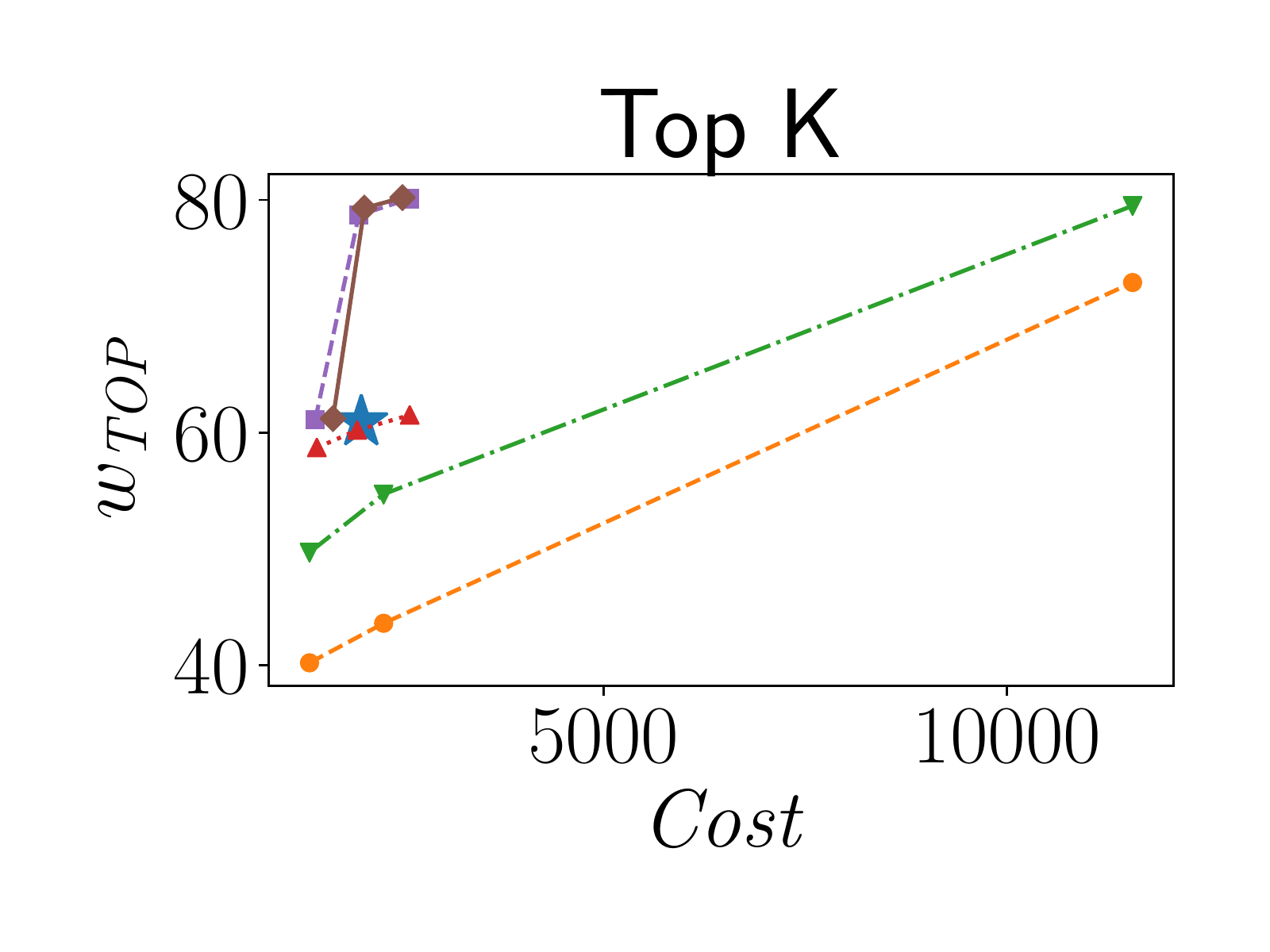}
    \includegraphics[width=0.27\columnwidth]{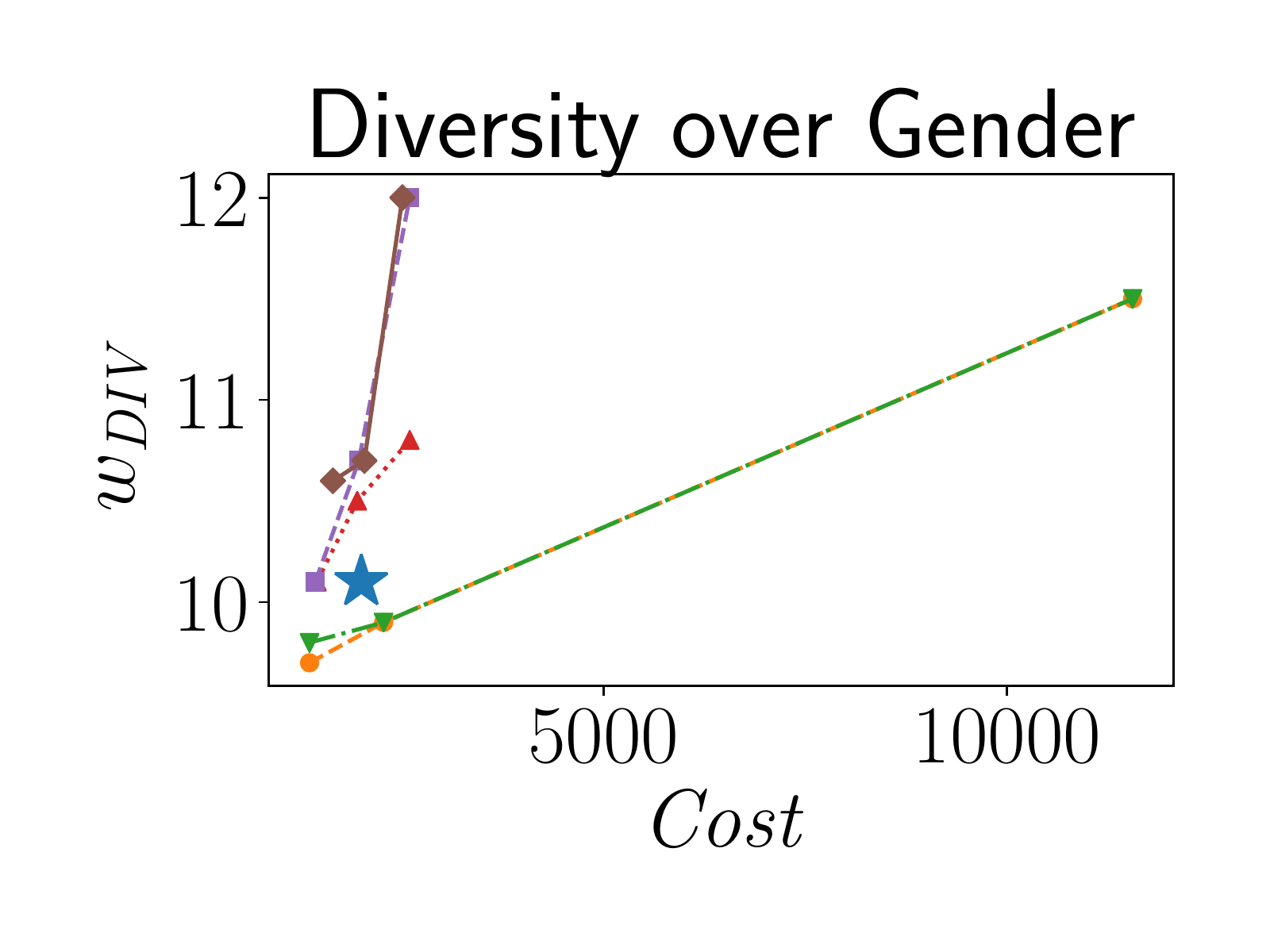}
    \includegraphics[width=0.27\columnwidth]{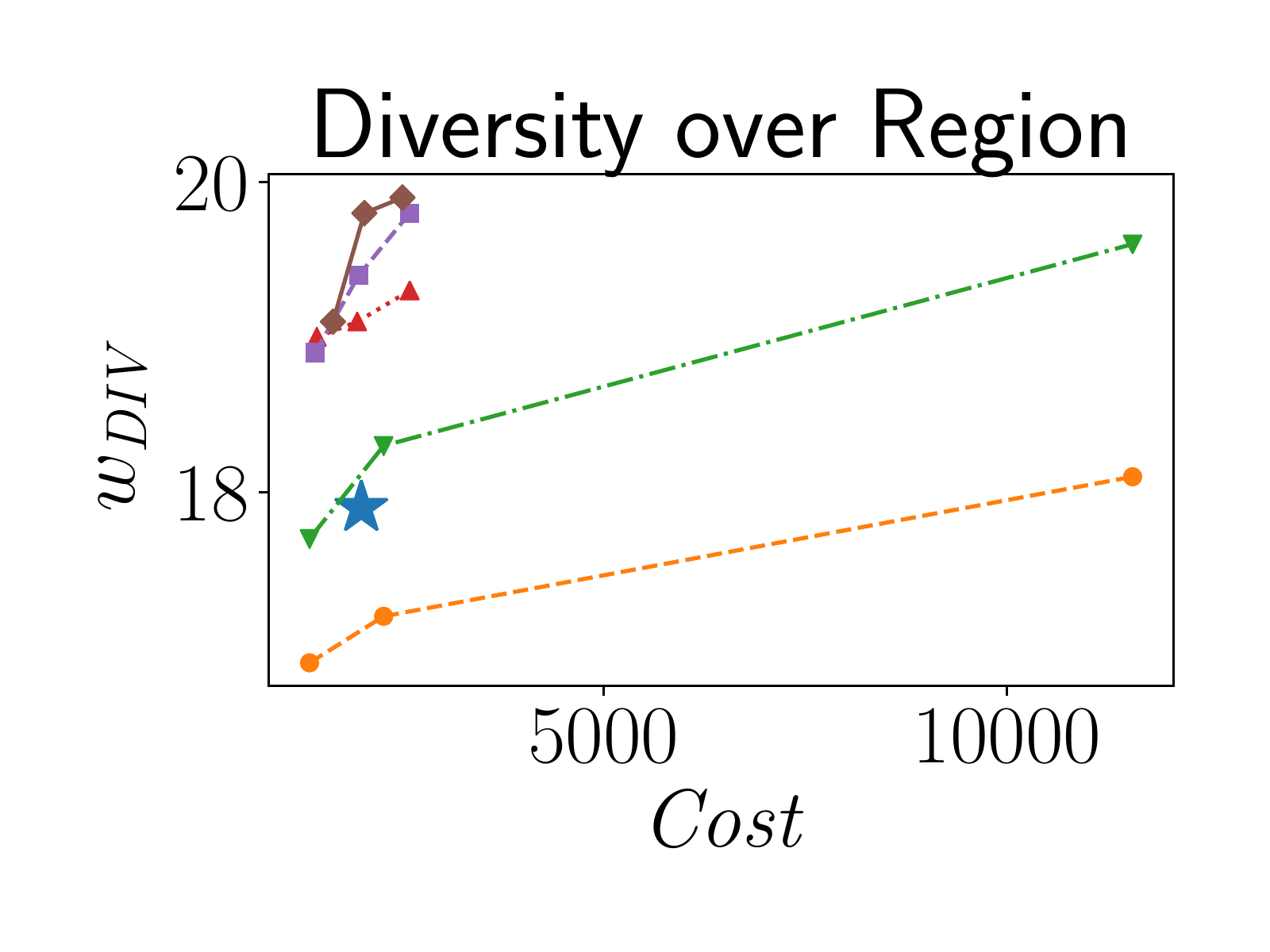}
    \includegraphics[width=0.15\columnwidth]{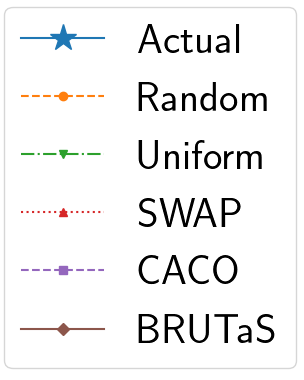}
    \caption{Utility vs Cost over four different algorithms (\randalg{}, \unifalg{}, \swapalg, \cut, \brutas) and the actual admissions decisions made at the university. Both \cut{} and \brutas{} produce equivalent cohorts to the actual admissions process with lower cost, or produce high quality cohorts than the actual admissions process with equivalent cost.}
    \label{fig:grad_experiments}
\end{figure}

\paragraph{Results.}
Figure~\ref{fig:grad_experiments} compares each algorithm to the actual admissions decision process performed by the real-world committee. In terms of utility, for both $\wtop{}$ and $\wdiverse{}$, \brutas{} and \cut{} achieve similar gains to the actual admissions process (higher for $\wdiverse$ over region of origin) when using less cost/budget. When roughly the same amount of budget is used, \brutas{} and \cut{} are able to provide higher predicted utility than the true accepted cohort, for both $\wtop{}$ and $\wdiverse{}$.  As expected, \brutas{} and \cut{} outperform the baseline algorithms \randalg, \unifalg. The adjusted \swapalg{} algorithm performs poorly in this restricted setting of tiered hiring. By limiting the strong pull policy of \swapalg{}, only small incremental improvements can be made as $\cost$ is increased.

\section{Conclusions \& Discussion of Future Research}
We provided a formalization of tiered structured interviewing and presented two algorithms, \cut{} in the PAC setting and \brutas{} in the fixed-budget setting, which select a near-optimal cohort of applicants with provable bounds. 
We used simulations to quantitatively explore the impact of various parameters on \cut{} and \brutas{} and found that behavior aligns with theory. We showed empirically that both \cut{} and \brutas{} work well with a submodular function that promotes diversity. Finally, on a real-world dataset from a large US-based Ph.D.~program, we showed that \cut{} and \brutas{} identify higher quality cohorts using equivalent budgets, or comparable cohorts using lower budgets, than the status quo admissions process.  
Moving forward, we plan to incorporate multi-dimensional feedback (e.g., with respect to an applicant's technical, presentation, and analytical qualities) into our model; recent work due to~\citet{Katz-Samuels18:Feasible,Katz-Samuels19:Top} introduces that feedback (in a single-tiered setting) as a marriage of MAB and constrained optimization, and we see this as a fruitful model to explore combining with our novel tiered system.

\noindent\textbf{Discussion.}
The results support the use of \brutas{} and \cut{} in a practical hiring scenario. Once policymakers have determined an objective, \brutas{} and \cut{} could help reduce costs and produce better cohorts of employees. Yet, we note that although this experiment uses real data, it is still a simulation. The classifier is not a true predictor of utility of an applicant. Indeed, finding an estimate of utility for an applicant is a nontrivial task. Additionally, the data that we are using incorporates human bias in admission decisions, and reviewer scores \citep{Schmader07:Linguistic,Angwin16:Bias}. Finally, defining an objective function on which to run \cut{} and \brutas{} is a difficult task. Recent advances in human value judgment aggregation~\citep{Freedman18:Adapting,Noothigattu18:Making} could find use in this decision-making framework.

\section{Acknowledgements}
 Schumann and Dickerson were supported by NSF IIS RI CAREER Award \#1846237.  We thank Google for gift support, University of Maryland professors David Jacobs and Ramani Duraiswami for helpful input, and the anonymous reviewers for helpful comments.

{
\bibliographystyle{named}
\bibliography{cut}}

\clearpage
\appendix

\setcounter{theorem}{0}

\section{Table of Symbols}\label{app:symbols}

In this section, for expository ease and reference, we aggregate all symbols used in the main paper and give a brief description of their meaning and use.  We note that each symbol is also defined explicitly in the body of the paper; Table~\ref{tbl:symbols} is provided as a reference.

\begin{table}[h!]
\centering
\begin{tabular}{ l | p{10cm}  }
  Symbol & Summary \\ \hline
  $n$ & Number of applicants/arms \\ \hline
  $A$ & Set of all arms (e.g., the set of all applicants) \\ \hline
  $a$ & An arm in $A$ (e.g., an individual applicant) \\ \hline
  $K$ & Size of the required cohort \\ \hline
  $\mathcal{M}_K(A)$ & Decisions class or set of possible cohorts of size $K$ \\ \hline
  $u(a)$ & True utility of arm $a$ where $u(a)\in[0,1]$ \\ \hline
  $\hat{u}(a)$ & Empirical estimate of the utility of arm $a$ \\ \hline
  $\rad(a)$ & Uncertainty bound around the empirical estimate of the utility $\hat{u}(a)$ of arm $a$ \\ \hline
  $w$ & Submodular and monotone objective function for a cohort where $w:\mathbb{R}^n\times\mathcal{M}_K(A)\rightarrow\mathbb{R}$ \\ \hline
  $\oracle$ & Maximization oracle defined in Equation~\ref{eq:oracle} and used by \cut{} \\ \hline
  $\coracle$ & Constrained maximization oracle used by \brutas{} \\ \hline
  $M^*$ & Optimal cohort given the true utilities \\ \hline
  $\Delta_a$ & The gap score of arm $a$ defined in Equation \ref{eq:gap_score} \\ \hline
  $\mathbf{H}$ & The hardness of a problem defined in Equation \ref{eq:hardness} \\ \hline
  $j_i$ & Cost of an arm pull at stage $i$ \\ \hline
  $s_i$ & Information gain of an arm pull at stage $i$ \\ \hline
  $m$ & Number of pulling stages (or interview stages) \\ \hline
  $K_i$ & Number of arms moving onto the next stage (stage $i+1$) \\ \hline
  $A_i$ & The active arms that move onto the next stage (stage $i+1$) \\ \hline
  $T(a)$ & Total information gain for arm $a$ \\ \hline
  $\tilde{u}(a)$ & Worst case estimate of utility of arm $a$ \\ \hline
  $\tilde{A}_i$ & Best cohort chosen by using the worst case estimates of utility \\ \hline
  $\epsilon$ & We want to return a cohort with total utility bounded by $w(M^*)-\epsilon$ for Algorithm~\ref{alg:conf_cut} \\ \hline
  $\delta$ & The probability that we are within $\epsilon$ of the best cohort for Algorithm \ref{alg:conf_cut} \\ \hline
  $\bar{T}_i$ & Budget constraint for round $i$ \\ \hline
  $\bar{T}$ & Total budget \\ \hline
  $T$ & Total Cost for CACO \\ \hline
  $\sigma$ & Property of the $\sigma$-sub-Gaussian tailed normal distribution \\ \hline
  $p$ & The arm with the greatest uncertainty in \cut\ \\ \hline
  $\tilde{K_i}$ & Number of decisions to make in round $i$ \\ \hline
  $\tilde{T}_{i,t}$ & Budget for \brutas\ in stage $i$, round $t$ \\ \hline
  $M_{i,t}$ & Best cohort chosen in \brutas\ stage $i$, round $t$, using empirical utilities \\ \hline
  $\tilde{M}_{i,t,a}$ & Pessimistic estimate in \brutas\ stage $i$, round $t$, for arm $a$ \\ \hline
  $p_{i,t}$ & Arm which results in largest gap in \brutas\ stage $i$, round $t$ \\ \hline
  $\tilde{\mathbf{H}}$ & Hardness for \brutas\ \\ \hline
  $P(a)$ & Probability of acceptance for an arm (candidate), estimated by Random Forest Classifier \\ \hline
  $q$ & Number of groups for submodular diversity function \\ \hline
  $P_1, P_2,\ldots, P_q$ & The groups for submodular diversity function \\ \hline

\end{tabular}
\caption{List of symbols used in the main paper.}\label{tbl:symbols}
\end{table}

\section{Proofs}\label{app:proofs}

In this section, we provide proofs for the theoretical results presented in the main paper.  Appendix~\ref{app:proofs-pac} gives proofs for \cut{}, defined as Algorithm~\ref{alg:conf_cut} in Section~\ref{sec:theory-pac}.  Appendix~\ref{app:proofs-budget} gives proofs for \brutas{}, defined as Algorithm~\ref{alg:bug_cut} in Section~\ref{sec:theory-budget}.

\subsection{\cut}\label{app:proofs-pac}

Theorem \ref{thm:scuc} requires lemmas from \citet{Chen14:Combinatorial}. We restate the theorem here for clarity and then proceed with the proof.

\begin{theorem}\label{thm:app_scuc}
Given any $\delta\in(0,1)$, any $\epsilon\in(0,1)$, any decision classes $\mathcal{M}_i\subseteq 2^{[n]}$ for each stage $i\in[m]$, any linear function $w$, and any expected rewards $u\in\mathbb{R}^n$, assume that the reward distribution $\varphi_a$ for each arm $a\in[n]$ has mean $u(a)$ with a $\sigma$-sub-Gaussian tail. Let $M^*_i=\argmax_{M\in \mathcal{M}_i}$ denote the optimal set in stage $i\in[m]$. Set $rad_t(a)=\sigma\sqrt{2\log(\frac{4K_{i-1}\cost_{i,t}^3}{\delta})/T_{i,t}(a)}$ for all $t>0$ and $a\in[n]$. Then, with probability at least $1-\delta$, the \cut\ algorithm (Algorithm~\ref{alg:conf_cut}) returns the set $\out$ where $w(\out)-w(M_m^*) < \epsilon$ and
{\small
\begin{align*}
T \leq & O\left(\sigma^2\sum_{i\in[m]}\left( \frac{j_i}{s_i}
\left(\sum_{a\in A_{i-1}} \min\left\{\frac{1}{\Delta_a^2},\frac{K_i^2}{\epsilon^2}\right\}\right)%
\log\left(\frac{\sigma^2j_i^4}{s_i\delta}\sum_{a\in A_{i-1}} \min\left\{\frac{1}{\Delta_a^2},\frac{K_i^2}{\epsilon^2}\right\}\right) \right)\right).
\end{align*}}
\end{theorem}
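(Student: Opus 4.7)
The plan is to adapt the CLUCB analysis of Chen, Lin, King, Lyu and Chen to the multi-stage setting with per-stage cost $j_i$, information gain $s_i$, and an $\epsilon$-tolerant stopping rule. The argument is essentially a per-stage reduction, so most of the work is to (i) open up the CLUCB sample-complexity lemma to expose the dependence on $s_i$, $j_i$, $K_i$ and $\epsilon$, and (ii) sum the resulting bounds across stages.

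First I would set up the ``nice event'' $\mathcal{E}$ that for every stage $i$, every round $t$ inside that stage, and every arm $a\in A_{i-1}$, one has $|\hat{u}_{i,t}(a)-u(a)|\le \rad_{i,t}(a)$. Since each pull of $a$ in stage $i$ is equivalent in distribution to $s_i$ unit-variance pulls (the $\sigma/\sqrt{s_i}$-sub-Gaussian tail), the effective sample count after $T_{i,t}(a)$ units of information is exactly what appears in the algorithm's radius. A standard sub-Gaussian tail bound combined with a union bound over all $(a,t)$ pairs---mimicking the confidence lemma of Chen et al.---then shows that $\Pr[\mathcal{E}^c]\le \delta$ provided the $4K_{i-1}\cost_{i,t}^3$ factor inside the $\log$ is used. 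Conditioning on $\mathcal{E}$, both oracle calls (with $\hat{u}$ and $\tilde{u}$) compare against values that sandwich the true utilities, so when stage $i$ terminates at round $t$ with $|w(\tilde{A}_i)-w(A_i)|<\epsilon$, a standard pessimistic/optimistic argument yields $w(M_i^*)-w(A_i)\le w(\tilde{A}_i)-w(A_i)<\epsilon$. Applied to the last stage, this gives $w(\out)-w(M_m^*)<\epsilon$.

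For the sample complexity I would argue stage by stage. Fix a stage $i$. The key claim, adapted from CLUCB, is that as long as the stage has not terminated, the pulled arm $p$ satisfies
\[
\rad_{i,t}(p)\ \ge\ \tfrac{1}{2}\max\!\left\{\Delta_p,\ \tfrac{\epsilon}{2K_i}\right\}.
\]
The $\Delta_p$ side is the usual CLUCB argument: if $p$'s confidence radius were smaller than $\Delta_p/2$, then the symmetric difference between $A_i$ and $\tilde{A}_i$ could not place $p$ in it. The $\epsilon/(2K_i)$ side is new: the symmetric difference contains at most $2K_i$ arms, so if every arm in it had radius below $\epsilon/(2K_i)$, then $|w(\tilde{A}_i)-w(A_i)|$ would already be below $\epsilon$ by linearity of $w$, contradicting non-termination. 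Inverting the radius formula and using that each pull contributes $s_i$ units of information, every arm $a\in A_{i-1}$ is pulled at most
\[
N_{i,a}\ =\ O\!\left(\frac{\sigma^2}{s_i}\cdot\min\!\left\{\frac{1}{\Delta_a^2},\ \frac{K_i^2}{\epsilon^2}\right\}\log\!\left(\frac{K_{i-1}\cost_{i,t}^3}{\delta}\right)\right)
\]
times before it exits the symmetric difference or the stage stops.

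Multiplying by the per-pull cost $j_i$, summing over $a\in A_{i-1}$, and then over $i\in[m]$, produces the claimed bound on $T$. The only remaining bookkeeping is the implicit equation in the $\log$ factor: $\cost_{i,t}$ appears on both sides, so I would follow the standard trick of substituting an upper bound on $\cost_{i,t}$ back into the logarithm, which introduces the $\sigma^2 j_i^4/(s_i\delta)$ inside it. The main obstacle, in my view, is cleanly establishing the $\epsilon/(2K_i)$ lower bound on $\rad_{i,t}(p)$---this is the only genuinely new step beyond Chen et al., and it requires a careful accounting of cancellations inside the symmetric difference to translate a uniform per-arm radius bound into the aggregate $\epsilon$-gap on $w$.
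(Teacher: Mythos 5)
Your proposal follows essentially the same route as the paper's proof: a union-bound confidence event, the lower bound $\rad_{i,t}(p)\ge\max\{\Delta_p/6,\;\epsilon/(2K_i)\}$ on the pulled arm's radius, inversion of the radius formula to bound per-arm information $T_i(a)$, multiplication by $j_i/s_i$ and summation over arms and stages, and a self-consistency argument to resolve the $\cost$ appearing inside its own logarithm (the paper does this via an explicit contradiction with constant $C\le 499$). The one step you flag as "genuinely new" --- the $\epsilon/(2K_i)$ part of the radius lower bound --- is in fact imported wholesale by the paper from Lemma 13 of Chen et al., and your sketched argument for it (at most $2K_i$ arms in the symmetric difference, so uniformly small radii force $|w(\tilde A_i)-w(A_i)|<\epsilon$) is exactly the right one.
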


\begin{proof}
Assume we are in some round $i$, and that we are at time $t_a$ where some arm $a$ is going to be pulled for the last time in round $i$. Set $\rad_{i,t}(a)=\sigma\sqrt{\frac{2\log(4K_{i-1}\cost_{i,t}^3/\delta)}{T_{i,t}(a)}}$ Using Lemma 13 from \citet{Chen14:Combinatorial} we know that $\rad_{i,t}\geq\max\left\{\frac{\Delta_a}{6},\frac{\epsilon}{2K_i}\right\}$. Before arm $a$ is pulled the following must be true:
\begin{equation}\label{eq:thm1_pf1}
\rad_{i,t_a}\geq\max\left\{\frac{\Delta_a}{6},\frac{\epsilon}{2K_i}\right\}
\end{equation}
\begin{align}
\rad_{i,t_a} &= \sigma\sqrt{\frac{2\log(4K_{i-1}\cost_{i,t_a}^3/\delta)}{T_i(a)-s_i}} \nonumber \\
&\leq \sigma\sqrt{\frac{2\log(4K_{i-1}j_i^3t_a^3/\delta)}{T_i(a)-s_i}} \label{eq:thm1_pf2}.
\end{align}

Equation \ref{eq:thm1_pf2} holds since $j_i>j_{i-1}>\cdots>j_0$. Given equations \ref{eq:thm1_pf1} and equation \ref{eq:thm1_pf2} we have,

\begin{align*}
\max\left\{\frac{\Delta_a}{6},\frac{\epsilon}{2K_i}\right\}
&\leq \sigma \sqrt{\frac{2\log(2K_{i-1}j_i^3t_a^3/\delta)}{T_i(a)-s_i}} \\
&\leq \sigma \sqrt{\frac{2\log(2K_{i-1}j_i^3T_i^3/\delta)}{T_i(a)-s_i}}
\end{align*}

Solving for $T_i(a)$ we have,
\begin{align}\label{eq:thm1_pf3}
T_i(a) \leq & \sigma^2\min\left\{\frac{72}{\Delta_a^2},\frac{16K_i^2}{\epsilon^2}\right\}\log(4K_{i-1}j^3T_i^3/\delta) \nonumber \\ &+s_i
\end{align}

Note that
\begin{equation*}
T_i(a)\leq T(a)
\end{equation*}
\begin{equation}\label{eq:thm1_pf5}
\frac{j_i}{s_i}\sum_{a\in A_{i-1}}T_i(a)=T_i.
\end{equation}

We will show later on in the proof

\begin{align}\label{eq:thm1_magic_equation}
T_i \leq &499\frac{\sigma^2j_i}{s_i}\left(\sum_{a\in A_{i-1}}\min\left\{\frac{4}{\Delta_a^2},\frac{K_i^2}{\epsilon^2}\right\}\right) \log\left(\frac{4\sigma^2j_i^4}{s_i\delta}\sum_{a\in A_{i-1}}\min\left\{\frac{4}{\Delta_a^2},\frac{K_i^2}{\epsilon^2}\right\}\right) \nonumber \\
&+ 2j_iK_{i-1}.
\end{align}

Summing up over equation \ref{eq:thm1_magic_equation} we have

\begin{align}\label{eq:thm1_pf6}
T \leq & 499\sigma^2\sum_{i\in[m]}\left( \frac{j_i}{s_i}
\left(\sum_{a\in A_{i-1}} \min\left\{\frac{4}{\Delta_a^2},\frac{K_i^2}{\epsilon^2}\right\}\right) \log\left(\frac{4\sigma^2j_i^4}{s_i\delta}\sum_{a\in A_{i-1}} \min\left\{\frac{4}{\Delta_a^2},\frac{K_i^2}{\epsilon^2}\right\}\right) \right)
\end{align}

which proves theorem \ref{thm:scuc}.

Now we will go back to prove equation \ref{eq:thm1_magic_equation}. If $K_{i-1}\geq\frac{1}{2}T_i$, then we see that $T_i\leq 2K_{i-1}$ and therefore equation \ref{eq:thm1_magic_equation} holds. Assume, then, that $K_{i-1}<\frac{1}{2}T_i$. Since $T_i>K_{i-1}$, we can write

\begin{align}\label{eq:thm1_pf4}
T=C\frac{\sigma^2j_i}{s_i}\left(\sum_{a\in A_{i-1}} \min\left\{\frac{4}{\Delta_a^2},\frac{K_i^2}{\epsilon^2}\right\}\right) \log\left(\frac{2K_{i-1}\sigma^2j_i^4}{s_i\delta}\sum_{a\in A_{i-1}} \min\left\{\frac{4}{\Delta_a^2},\frac{K_i^2}{\epsilon^2}\right\}\right).
\end{align}

If $C<499$ then equation \ref{eq:thm1_magic_equation} holds. Suppose then that $C>499$. Using equation \ref{eq:thm1_pf5} and summing equation \ref{eq:thm1_pf6} for all active arms $a\in A_{i-1}$, we have

\begin{eqnarray}
T_i & \leq & \frac{j_i}{s_i}
  \left(K_{i-1}s_i + \sum_{a\in A_{i-1}} \sigma^2\min\left\{\frac{72}{\Delta_a^2},\frac{16K_i^2}{\epsilon^2}\right\} 
  \log\left(\frac{4K_{i-1}j_i^3T_i^3}{\delta}\right)\right) \nonumber \\
& \leq & K_{i-1}j_i +  \frac{18\sigma^2j_i}{s_i}
  \left(\sum_{a\in A_{i-1}} \min\left\{\frac{4}{\Delta_a^2},\frac{K_i^2}{\epsilon^2}\right\}\right)
  \log\left(\frac{4K_{i-1}j_i^3T_i^3}{\delta}\right) \nonumber \\
& = & K_{i-1}j_i + \frac{18\sigma^2j_i}{s_i}
  \left(\sum_{a\in A_{i-1}} \min\left\{\frac{4}{\Delta_a^2},\frac{K_i^2}{\epsilon^2}\right\}\right)
  \log\left(\frac{4K_{i-1}j_i^3}{\delta}\right) \nonumber \\
  & & + \frac{54\sigma^2j_i}{s_i}
  \left(\sum_{a\in A_{i-1}} \min\left\{\frac{4}{\Delta_a^2},\frac{K_i^2}{\epsilon^2}\right\}\right)\log(T) \nonumber \\
& \leq & K_{i-1}j_i + \frac{18\sigma^2j_i}{s_i}
  \left(\sum_{a\in A_{i-1}} \min\left\{\frac{4}{\Delta_a^2},\frac{K_i^2}{\epsilon^2}\right\}\right)
  \log\left(\frac{4K_{i-1}j_i^3}{\delta}\right) \nonumber \\
  & & + \frac{54\sigma^2j_i}{s_i}
  \left(\sum_{a\in A_{i-1}} \min\left\{\frac{4}{\Delta_a^2},\frac{K_i^2}{\epsilon^2}\right\}\right) \log\left(\frac{2C\sigma^2j^4}{s_i\delta}
  \left(\sum_{a\in A_{i-1}} \min\left\{\frac{4}{\Delta_a^2},\frac{K_i^2}{\epsilon^2}\right\}\right) \right. \nonumber \\
  & & \left. \log\left(\frac{4K_{i-1}\sigma^2j_i^4}{s_i\delta}
  \sum_{a\in A_{i-1}} \min\left\{\frac{4}{\Delta_a^2},\frac{K_i^2}{\epsilon^2}\right\}\right)\right) \label{eq:thm1_pf7}\\
& = &  K_{i-1}j_i + \frac{18\sigma^2j_i}{s_i}
  \left(\sum_{a\in A_{i-1}} \min\left\{\frac{4}{\Delta_a^2},\frac{K_i^2}{\epsilon^2}\right\}\right)
  \log\left(\frac{4K_{i-1}j_i^3}{\delta}\right) \nonumber \\
  & & + \frac{54\sigma^2j_i}{s_i}
  \left(\sum_{a\in A_{i-1}} \min\left\{\frac{4}{\Delta_a^2},\frac{K_i^2}{\epsilon^2}\right\}\right)\log(2C) \nonumber \\
  & & + \frac{54\sigma^2j_i}{s_i}
  \left(\sum_{a\in A_{i-1}} \min\left\{\frac{4}{\Delta_a^2},\frac{K_i^2}{\epsilon^2}\right\}\right)  \log\left(\frac{\sigma^2j^4}{s_i\delta}
  \sum_{a\in A_{i-1}} \min\left\{\frac{4}{\Delta_a^2},\frac{K_i^2}{\epsilon^2}\right\}\right) \nonumber \\
  & & + \frac{54\sigma^2j_i}{s_i}
  \left(\sum_{a\in A_{i-1}} \min\left\{\frac{4}{\Delta_a^2},\frac{K_i^2}{\epsilon^2}\right\}\right) \nonumber \\
  & & \log\log\left(\frac{4K_{i-1}\sigma^2j_i^4}{s_i\delta}
  \sum_{a\in A_{i-1}} \min\left\{\frac{4}{\Delta_a^2},\frac{K_i^2}{\epsilon^2}\right\}\right) \nonumber \\
& \leq &  K_{i-1}j_i + \frac{18\sigma^2j_i}{s_i}
  \left(\sum_{a\in A_{i-1}} \min\left\{\frac{4}{\Delta_a^2},\frac{K_i^2}{\epsilon^2}\right\}\right) \nonumber \\
  & & \log\left(\frac{4K_{i-1}\sigma^2j^4}{s_i\delta}
  \sum_{a\in A_{i-1}} \min\left\{\frac{4}{\Delta_a^2},\frac{K_i^2}{\epsilon^2}\right\}\right) \nonumber \\
  & & + \frac{54\sigma^2j_i}{s_i}
  \left(\sum_{a\in A_{i-1}} \min\left\{\frac{4}{\Delta_a^2},\frac{K_i^2}{\epsilon^2}\right\}\right) \nonumber \\ & &\log(2C)
  \log\left(\frac{4K_{i-1}\sigma^2j^4}{s_i\delta}
  \sum_{a\in A_{i-1}} \min\left\{\frac{4}{\Delta_a^2},\frac{K_i^2}{\epsilon^2}\right\}\right) \nonumber \\
  & & + \frac{54\sigma^2j_i}{s_i}
  \left(\sum_{a\in A_{i-1}} \min\left\{\frac{4}{\Delta_a^2},\frac{K_i^2}{\epsilon^2}\right\}\right)  \log\left(\frac{4K_{i-1}\sigma^2j^4}{s_i\delta}
  \sum_{a\in A_{i-1}} \min\left\{\frac{4}{\Delta_a^2},\frac{K_i^2}{\epsilon^2}\right\}\right) \nonumber \\
  & & + \frac{54\sigma^2j_i}{s_i}
  \left(\sum_{a\in A_{i-1}} \min\left\{\frac{4}{\Delta_a^2},\frac{K_i^2}{\epsilon^2}\right\}\right) \log\left(\frac{4K_{i-1}\sigma^2j_i^4}{s_i\delta}
  \sum_{a\in A_{i-1}} \min\left\{\frac{4}{\Delta_a^2},\frac{K_i^2}{\epsilon^2}\right\}\right) \nonumber \\
& = & K_{i-1}j_i +  (126+54\log(2C))\frac{\sigma^2j_i}{s_i}
  \left(\sum_{a\in A_{i-1}} \min\left\{\frac{4}{\Delta_a^2},\frac{K_i^2}{\epsilon^2}\right\}\right) \nonumber \\
  & & \log\left(\frac{4K_{i-1}\sigma^2j_i^4}{s_i\delta}
  \sum_{a\in A_{i-1}} \min\left\{\frac{4}{\Delta_a^2},\frac{K_i^2}{\epsilon^2}\right\}\right) \nonumber \\
& \leq & K_{i-1}j_i  + C\frac{\sigma^2j_i}{s_i}
  \left(\sum_{a\in A_{i-1}} \min\left\{\frac{4}{\Delta_a^2},\frac{K_i^2}{\epsilon^2}\right\}\right) \nonumber \\
  & & \log\left(\frac{4K_{i-1}\sigma^2j_i^4}{s_i\delta}
  \sum_{a\in A_{i-1}} \min\left\{\frac{4}{\Delta_a^2},\frac{K_i^2}{\epsilon^2}\right\}\right) \label{eq:thm1_pf8} \\
& = & T_i \label{eq:thm1_pf9}
\end{eqnarray}
where equation \ref{eq:thm1_pf7} follows from equation \ref{eq:thm1_pf4} and the assumption that $K_{i-1}<\frac{1}{2}T_i$; equation \ref{eq:thm1_pf8} follows since $136+54\log(2C)<C$ for all $C>499$; and \ref{eq:thm1_pf9} is due to \ref{eq:thm1_pf4}. Equation \ref{eq:thm1_pf9} is a contradiction. Therefore $C\leq499$ and we have proved equation \ref{eq:thm1_magic_equation}.

\end{proof}

\subsection{\brutas}\label{app:proofs-budget}

In order to prove Theorem~\ref{thm:bcut}, we first need a few lemmas.

\begin{lemma}\label{lem:tau}
Let $\Delta_{(1)},\ldots,\Delta_{(n)}$ be a permutation of $\Delta_1,\ldots\Delta_n$ (defined in Eq. (\ref{eq:gap_score})) such that $\Delta_{(1)}\leq\ldots\leq\Delta_{(n)}$. Given a stage $i\in[m]$, and a phase $t\in[\tilde{K}_i]$, we define random event $\tau_{i,t}$ as follows
\begin{align}\label{eq:tau}
\tau_{i,t} = & \left\{\forall i \in [n]\setminus(A_t\cup B_t)\quad|\hat{u}_{i,t}(a)-u(a)|  < \frac{\Delta_{(n-\sum_{b=0}^{i-1}\tilde{K}_b - t + 1)}}{6}\right\}.
\end{align}
Then, we have
\begin{align}
\tau= &\Pr\left[\bigcap_{i=1}^m\bigcap_{t=1}^{\tilde{K}_i}\tau_{i,t}\right]  \geq 1 - n^2\exp\left(-\frac{\sum_{b=1}^m s_b(\bar{T}_b-\tilde{K}_b)/(j_i\tlog(\tilde{K}_b))}{72\sigma^2\tilde{\mathbf{H}}}\right).
\end{align}
\end{lemma}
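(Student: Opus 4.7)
My plan follows the concentration-plus-union-bound scheme used by \citet{Chen14:Combinatorial} to analyze CSAR, generalized to accommodate the multi-stage structure. Fix a stage $i$ and round $t$. For each active arm $a \in [n] \setminus (A_{i,t} \cup B_{i,t})$, note that $a$ remained unclassified through every previous stage, so it has been pulled $\tilde{T}_{b,\tilde{K}_b}$ times in each earlier stage $b < i$ and $\tilde{T}_{i,t}$ times so far in stage $i$. Treating $\hat{u}_{i,t}(a)$ as the inverse-variance-weighted average of these pulls (each pull in stage $b$ being $\sigma/\sqrt{s_b}$-sub-Gaussian), a standard sub-Gaussian tail inequality gives
\[
\Pr\bigl[\,|\hat{u}_{i,t}(a)-u(a)| \geq x\bigr] \;\leq\; 2\exp\!\left(-\frac{\bigl(\sum_{b<i} s_b\tilde{T}_{b,\tilde{K}_b}+s_i\tilde{T}_{i,t}\bigr)x^2}{2\sigma^2}\right).
\]

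Next I would set $x=\Delta_{(N_i-t+1)}/6$, where $N_i:=n-\sum_{b<i}\tilde{K}_b$ is the number of arms active at the start of stage $i$, and apply the hardness inequality $\Delta_{(k)}^{2}\geq k/\tilde{\mathbf{H}}$ (which is immediate from $\tilde{\mathbf{H}}=\max_{k} k\Delta_{(k)}^{-2}$) to turn the squared gap into $(N_i-t+1)/\tilde{\mathbf{H}}$ in the exponent. Then I would substitute the algorithm's pull schedule from line~\ref{ln:alg2_1}, which gives $\tilde{T}_{i,t}(\tilde{K}_i-t+1)\geq (\bar{T}_i-\tilde{K}_i)/(j_i\tlog(\tilde{K}_i))$ and, at the end of each earlier stage $b$, $\tilde{T}_{b,\tilde{K}_b}\geq (\bar{T}_b-\tilde{K}_b)/(j_b\tlog(\tilde{K}_b))$. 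Using $N_i-t+1\geq \tilde{K}_i-t+1$ (since $N_i\geq\tilde{K}_i$) to cancel the $t$-dependence on the current-stage contribution, the exponent collapses into a per-stage sum of terms of the form $s_b(\bar{T}_b-\tilde{K}_b)/(j_b\tlog(\tilde{K}_b))$ divided by $72\sigma^2\tilde{\mathbf{H}}$.

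To conclude, I would union-bound over the $\leq N_i-t+1$ active arms to get $\Pr[\neg\tau_{i,t}]\leq 2(N_i-t+1)\exp(-E_{i,t})$, and then union-bound over the $\sum_i\tilde{K}_i=n$ rounds; since $\sum_{i,t}(N_i-t+1)\leq n^2$, the prefactors collapse into the $n^2$ appearing in front of the exponential in the final bound.

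The hard part is step~3, namely getting the full sum $\sum_{b=1}^{m}$ in the exponent uniformly across rounds. A round $(i,t)$ with $i<m$ has only accumulated information from stages $b\leq i$, so a naive per-round estimate only yields $\sum_{b\leq i}$. Closing this gap requires carefully exploiting the multiplier $(N_i-t+1)$ produced by the hardness step (which is $\Omega(n)$ in early rounds) together with the fact that the algorithm's schedule $\tilde{T}_{i,t}\propto 1/(\tilde{K}_i-t+1)$ has partial sums that cancel against $\tlog(\tilde{K}_i)=\sum_{t=1}^{\tilde{K}_i}1/t$. Getting these two cancellations to line up into a single clean bound is the subtle portion of the argument.
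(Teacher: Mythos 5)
Your outline reproduces the paper's proof essentially step for step: a sub-Gaussian tail bound for each active arm in which the effective sample size is the information-weighted pull count $\sum_{b<i} s_b\tilde{T}_{b,\tilde{K}_b}+s_i\tilde{T}_{i,t}$, the hardness inequality $\Delta_{(k)}^{2}\geq k/\tilde{\mathbf{H}}$ applied at index $k=n-\sum_{b<i}\tilde{K}_b-t+1$, substitution of the schedule from Line~\ref{ln:alg2_1} so that $(\tilde{K}_i-t+1)$ cancels against the $t$-dependence of $\tilde{T}_{i,t}$, and a union bound over rounds and active arms giving the $n^2$ prefactor. Up to that point your argument and the paper's are the same.

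The difficulty you flag at the end is genuine, and you should know that the paper's own proof does not actually resolve it. After the union bound, the paper has
$1-2\sum_{i,t}(\cdot)\exp\bigl(-E_i\bigr)$ with $E_i\propto\sum_{b=1}^{i}s_b(\bar{T}_b-\tilde{K}_b)/(j_b\tlog(\tilde{K}_b))$, and then simply rewrites the exponent with $\sum_{b=1}^{m}$ in the final display. Since each per-stage term is nonnegative, $E_i\leq E_m$ and hence $\exp(-E_i)\geq\exp(-E_m)$, so that substitution moves the bound in the \emph{wrong} direction; the early rounds, which have only seen the budgets of stages $b\leq i$, dominate the failure probability. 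The cancellation you hope to extract from the multiplier $N_i-t+1$ cannot rescue this: it is a polynomial factor, whereas the discrepancy between $\exp(-E_1)$ and $\exp(-E_m)$ is exponential in the budgets $\bar{T}_2,\ldots,\bar{T}_m$. What the method genuinely yields is the bound with $\sum_{b=1}^{i}$ (equivalently, in closed form, the worst case $\min_i E_i=E_1$) in the exponent, or the unsimplified sum $2\sum_{i,t}(N_i-t+1)\exp(-E_i)$. So your proposal is as complete as the argument can be made along these lines; the remaining step is not a subtlety you failed to see but an inequality that does not hold as stated.
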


\begin{proof}
In round $i$ at phase $t$, arm $a$ has been pulled $\bar{T}(a)$ times. Therefore, by Hoeffding's inequality, we have
\begin{align}\label{eq:hoef_bnd}
&\Pr\left[|\hat{u}_{i,t}(a)-u(a)| \geq \frac{\Delta_{(n-\sum_{b=0}^{i-1}\tilde{K}_b - t + 1)}}{6}\right] \leq 2\exp\left(-\frac{\bar{T}_{i,t}(a)\Delta_{(n-\sum_{b=0}^{i-1}\tilde{K}_b - t + 1)}^2}{72\sigma^2}\right)
\end{align}
By using the definition of $\tilde{T}_{i,t}$, the quantity $\tilde{T}_{i,t}\Delta^2_{(n-\sum_{b=1}^{\tilde{K}_{i-1}}\tilde{K}_b-t+1)}$ on the right-hand side of Eq.~\ref{eq:hoef_bnd} can be further bounded by
\begin{align*}
&\bar{T}_{i,t}\Delta_{(n-\sum_{b=1}^{\tilde{K}_{i-1}}\tilde{K}_b-t+1)} \\ &= (s_i\tilde{T}_{i,t}+ \sum_{b=1}^{i-1}s_b\tilde{T}_{b,\tilde{K}_b+1})\Delta^2_{(n-\sum_{b=1}^{\tilde{K}_{i-1}}\tilde{K}_b-t+1)} \\
&\geq \left(\frac{s_i(\bar{T}_{i,t}-\tilde{K}_i)}{j_i\tlog(\tilde{K}_i)(\tilde{K}_i-t+1)}+\sum_{b=0}^{i-1}\frac{s_b(\bar{T}_{b,\tilde{K}_b+1}-\tilde{K}_b)}{j_b\tlog(\tilde{K}_b)(\tilde{K}_b-\tilde{K}_b+2)}\right)\Delta^2_{(n-\sum_{b=1}^{i-1}\tilde{K}_b-t+1)} \\
&\geq \sum_{b=1}^i \frac{s_b(\bar{T}_b-\tilde{K}_b)}{j_b\tlog(\tilde{K}_b)\tilde{\mathbf{H}}},
\end{align*}
where the last inequality follows from the definition of $\tilde{\mathbf{H}}=\max_{i\in [n]}i\Delta_{(i)}^{-2}$. By plugging the last inequality into Eq.~\ref{eq:hoef_bnd}, we have
\begin{align}\label{eq:hoef_bnd2}
&\Pr\left[|\hat{u}_{i,t}(a)-u(a)| \geq \frac{\Delta_{(n-\sum_{b=0}^{i-1}\tilde{K}_b - t + 1)}}{6}\right] \leq 2\exp\left(-\frac{\sum_{b=1}^i\left(s_b(\bar{T}_b-\tilde{K}_b)/(j_b\tlog(\tilde{K}_b)\right)}{72\sigma^2\tilde{\mathbf{H}}}\right)
\end{align}
Now, using Eq.~\ref{eq:hoef_bnd2} and a union bound for all $i\in[m]$, all $t\in[\tilde{K}_i]$, and all $a\in[n]\setminus(A_{t,i}\cup B_{t,i})$, we have
\begin{align*}
&\Pr\left[\bigcap_{i=1}^m\bigcap_{t=1}^{\tilde{K}_i}\tau_{i,t}\right] \\ &\geq 1 - 2\sum_{i=1}^{m}\sum_{t=1}^{\tilde{K}_i}(n-\sum_{b=0}^{i-1}\tilde{K}_b-t+1)\exp\left(-\frac{\sum_{b=1}^i\left(s_b(\bar{T}_b-\tilde{K}_b)/(j_b\tlog(\tilde{K}_b))\right)}{72\sigma^2\tilde{\mathbf{H}}}\right) \\
&\geq 1-n^2\exp\left(-\frac{\sum_{b=1}^m s_b(T_b-K_b)/(j_i\tlog(K_i))}{72\sigma^2\mathbf{H}}\right).
\end{align*}
\end{proof}

\begin{lemma}
Fix a stage $i\in[m]$, and a phase $t\in[\tilde{K}_i]$, suppose that random event $\tau_{i,t}$ occurs. For any vector $\mathbf{a}\in\mathbb{R}^n$, suppose that $\supp(\mathbf{a})\cap(A_{i,t}\cup B_{i,t}=\varnothing$, where $\supp(\mathbf{a})\triangleq\{i|a(i)\neq0\}$ is the support of vector $\mathbf{a}$. Then, we have
$$
|\langle \tilde{\mathbf{u}}_{i,t},\mathbf{a}\rangle - \langle \mathbf{u}_{i,t},\mathbf{a}\rangle| < \frac{\Delta_{(n-\sum_{b=0}^{i-1}\tilde{K}_i -t + 1)}}{6}\|\mathbf{a}\|_1
$$
\end{lemma}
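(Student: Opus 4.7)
The plan is to reduce the claim to a straightforward application of the triangle (equivalently, Hölder) inequality, since the hypothesis on $\tau_{i,t}$ already gives a coordinate-wise concentration bound on every coordinate that matters. The only inputs used are the definition of $\supp(\mathbf{a})$, the definition of $\tau_{i,t}$ from Eq.~\eqref{eq:tau}, and the premise that $\supp(\mathbf{a}) \cap (A_{i,t} \cup B_{i,t}) = \varnothing$, which together guarantee that every index where $\mathbf{a}$ is nonzero is an \emph{active} arm to which the concentration bound in $\tau_{i,t}$ applies.

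First, I would expand the inner product and apply the triangle inequality over the support:
\begin{align*}
\bigl|\langle \tilde{\mathbf{u}}_{i,t}, \mathbf{a}\rangle - \langle \mathbf{u}, \mathbf{a}\rangle\bigr|
 = \Bigl|\sum_{k \in \supp(\mathbf{a})} a(k)\bigl(\tilde{u}_{i,t}(k) - u(k)\bigr)\Bigr|
 \leq \sum_{k \in \supp(\mathbf{a})} |a(k)| \cdot \bigl|\tilde{u}_{i,t}(k) - u(k)\bigr|.
\end{align*}
Next, because the support of $\mathbf{a}$ is disjoint from $A_{i,t} \cup B_{i,t}$, each $k \in \supp(\mathbf{a})$ is an arm still under consideration in round $t$ of stage $i$, so the event $\tau_{i,t}$ guarantees the pointwise bound $|\tilde{u}_{i,t}(k) - u(k)| < \Delta_{(n - \sum_{b=0}^{i-1} \tilde{K}_b - t + 1)}/6$. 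Substituting this into the displayed inequality and factoring out the common scalar yields
\begin{align*}
\bigl|\langle \tilde{\mathbf{u}}_{i,t}, \mathbf{a}\rangle - \langle \mathbf{u}, \mathbf{a}\rangle\bigr|
 < \frac{\Delta_{(n - \sum_{b=0}^{i-1} \tilde{K}_b - t + 1)}}{6} \sum_{k \in \supp(\mathbf{a})} |a(k)|
 = \frac{\Delta_{(n - \sum_{b=0}^{i-1} \tilde{K}_b - t + 1)}}{6}\|\mathbf{a}\|_1,
\end{align*}
where the last equality is just the definition of the $\ell_1$ norm restricted to the support. The strict inequality is preserved as long as $\supp(\mathbf{a}) \neq \varnothing$; the case $\mathbf{a} = \mathbf{0}$ is trivial since both sides vanish, and it can be handled as a degenerate base case or by interpreting the bound with a non-strict inequality in that corner case.

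There is essentially no hard step: the real work already happened in Lemma~\ref{lem:tau}, which established the high-probability concentration event $\tau_{i,t}$, and the present lemma merely lifts that coordinate-wise bound to arbitrary linear functionals whose support avoids the already-accepted/rejected arms. The only point requiring even minor care is bookkeeping around the index $n - \sum_{b=0}^{i-1} \tilde{K}_b - t + 1$, which must be exactly the index used in the statement of $\tau_{i,t}$; matching the two conventions verbatim is the sole place a slip could occur.
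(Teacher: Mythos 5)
Your proposal is correct and follows essentially the same route as the paper's proof: expand the inner product, restrict to the active coordinates via the support assumption, apply the triangle inequality, and invoke the coordinate-wise bound from the event $\tau_{i,t}$ before factoring out the $\ell_1$ norm. Your explicit handling of the degenerate case $\mathbf{a}=\mathbf{0}$ (where the strict inequality fails) is a small point of care the paper omits, but it does not change the argument.
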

\begin{proof}
Suppose that $\tau_{i,t}$ occurs. Then, we have
\begin{align}
&|\langle \tilde{\mathbf{u}}_{i,t},\mathbf{a}\rangle - \langle \mathbf{u}_{i,t},\mathbf{a}\rangle| \\ &= |\langle\tilde{\mathbf{u}}_{i,t}-\mathbf{w},\mathbf{a}| \nonumber \\
 &= \left| \sum_{b=1}^n \left(\tilde{u}_{t,i}(b)-u(b)\right)a(b) \right| \nonumber \\
 &\leq \left| \sum_{b\in[n]\setminus(A_{i,t}\cup B_{i,t})}\left(\tilde{u}_{i,t}(b)-u(b)\right)a(b) \right| \label{eq:lm2_1} \\
 &\leq \sum_{b\in[n]\setminus(A_{i,t}\cup B_{i,t})} \left| \left(\tilde{u}_{i,t}(b)-u(b) \right)a(b) \right| \nonumber \\
 &\leq \sum_{b\in[n]\setminus(A_{i,t}\cup B_{i,t})} |\tilde{u}_{i,t}(b)-u(i)||a(i)| \nonumber \\
 &< \frac{\Delta_{(n-\sum_{b=0}^{i-1}\tilde{K}_i-t+1)}}{6} \sum_{b\in[n]\setminus(A_{i,t}\cup B_{i,t})} |a(b)| \label{eq:lm2_2} \\
 &=\frac{\Delta_{(n-\sum_{b=0}^{i-1}\tilde{K}_i-t+1)}}{6} \|\mathbf{a}\|_1
\end{align}
where Eq.~\ref{eq:lm2_1} follows from the assumption that $\mathbf{a}$ is supported on $[n]\setminus(A_{i,t} \cup B_{i,t})$; Eq.~\ref{eq:lm2_2} follows from the definition of $\tau_{i,t}$ (Eq.~\ref{eq:tau}).
\end{proof}

\begin{lemma}\label{lem:sets}
Fix a stage $i\in[m]$, and a phase $t\in[\tilde{K}_i]$. Suppose that $A_{i,t}\subseteq M_*$ and $B_{i,t}\cap M_*=\varnothing$. Let $M$ be a set such that $A_{i,t}\subseteq M$ and $B_{i,t}\cap M=\varnothing$. Let $a$ and $b$ be two sets satisfying $a\subseteq M\setminus M_*$, $b\subseteq M_*\setminus M$, and $a\cap b = \varnothing$. Then, we have
$$
A_{i,t}\subseteq(M\setminus a\cup b)\quad
$$
and
$$
\quad B_{i,t}\cap(M\setminus a\cup b)=\varnothing\quad
$$
and
$$
\quad(a\cup b)\cap(A_{i,t}\cup B_{i,t})=\varnothing .
$$
\end{lemma}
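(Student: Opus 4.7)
This is a pure set-theoretic lemma with no probabilistic or analytic content, so the plan is simply to chase memberships using the given inclusions. The three conclusions will be established one at a time, and each will use only (i) the hypotheses $A_{i,t}\subseteq M$, $A_{i,t}\subseteq M_*$, $B_{i,t}\cap M=\varnothing$, $B_{i,t}\cap M_*=\varnothing$, and (ii) the defining containments $a\subseteq M\setminus M_*$ and $b\subseteq M_*\setminus M$.

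For the first claim, $A_{i,t}\subseteq (M\setminus a)\cup b$: since $a\subseteq M\setminus M_*$ we have $a\cap M_*=\varnothing$, and combined with $A_{i,t}\subseteq M_*$ this yields $A_{i,t}\cap a=\varnothing$. Together with $A_{i,t}\subseteq M$, this gives $A_{i,t}\subseteq M\setminus a\subseteq (M\setminus a)\cup b$. For the second claim, $B_{i,t}\cap((M\setminus a)\cup b)=\varnothing$: the hypothesis $B_{i,t}\cap M=\varnothing$ immediately implies $B_{i,t}\cap(M\setminus a)=\varnothing$, while $b\subseteq M_*$ together with $B_{i,t}\cap M_*=\varnothing$ gives $B_{i,t}\cap b=\varnothing$; taking the union of these two disjointness facts completes this part.

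For the third claim, I will split $(a\cup b)\cap(A_{i,t}\cup B_{i,t})$ into the four pairwise intersections and verify each is empty. The intersection $a\cap A_{i,t}$ is empty because $a\cap M_*=\varnothing$ and $A_{i,t}\subseteq M_*$; the intersection $a\cap B_{i,t}$ is empty because $a\subseteq M$ and $B_{i,t}\cap M=\varnothing$; the intersection $b\cap A_{i,t}$ is empty because $b\subseteq M_*\setminus M$ gives $b\cap M=\varnothing$ and $A_{i,t}\subseteq M$; and finally $b\cap B_{i,t}$ is empty because $b\subseteq M_*$ and $B_{i,t}\cap M_*=\varnothing$. Unioning these four disjointness statements yields the desired conclusion.

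The main obstacle is essentially cosmetic: there is no real difficulty, only the need to keep careful track of which hypothesis justifies which step so that the proof reads cleanly. The only subtlety worth flagging is that the expression $M\setminus a\cup b$ should be parsed as $(M\setminus a)\cup b$ (not $M\setminus(a\cup b)$); this is consistent with the use of the lemma in the surrounding argument, where $M\setminus a\cup b$ denotes the result of swapping the elements of $a$ out of $M$ and the elements of $b$ into $M$.
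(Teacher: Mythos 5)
Your proof is correct and complete. The paper itself gives no proof of this lemma---it simply attributes it to \citet{Chen14:Combinatorial}---so your elementary membership-chasing verification (which matches the canonical argument for this statement) fills in exactly what the paper omits; your parsing of $M\setminus a\cup b$ as $(M\setminus a)\cup b$ is also the intended one. One small observation: your argument never uses the hypothesis $a\cap b=\varnothing$, which is fine---that assumption is carried along only because it is needed where the lemma is applied.
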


Lemma~\ref{lem:sets} is due to \citet{Chen14:Combinatorial}.

\begin{lemma}\label{lem:2_3}
Fix any stage $i\in[m]$, and any phase $t\in[\tilde{K}_i]$ such that $\sum_{b=0}^{i-1}\tilde{K}_i+t>0$. Suppose that event $\tau_{i,t}$ occurs. Also assume that $A_{i,t}\subseteq M_*$ and $B_{i,t}\cap M_*=\varnothing$. Let $a\in[n]\setminus(A_{i,t}\cup B_{i,t})$ be an active arm such that $\Delta_{(n-\sum_{b=0}^{i-1}K_i-t+1)}\leq \Delta_a$. Them, we have
$$
\tilde{u}_{i,t}(M_{i,t})-\tilde{u}_{i,t}(\tilde{M}_{i,t,a}) > \frac{2}{3}\Delta_{(n-\sum_{b=0}^{i-1}\tilde{K}_i-t+1)}
$$
\end{lemma}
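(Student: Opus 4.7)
Let $r \triangleq n - \sum_{b=0}^{i-1}\tilde{K}_b - t + 1$ and read $\tilde{u}_{i,t}$ in the statement as the empirical mean $\hat{u}_{i,t}$ used by \brutas's constrained oracle (Algorithm~\ref{alg:bug_cut} employs no pessimistic radii and consistently writes $\hat{u}$). Under $\tau_{i,t}$, every active arm $b$ satisfies $|\hat{u}_{i,t}(b) - u(b)| < \Delta_{(r)}/6$. My strategy is to apply Lemma~2 (the bounded inner-product estimate proved above) to the indicator difference $\boldsymbol{\chi} = \mathbf{1}_{M_{i,t}} - \mathbf{1}_{\tilde{M}_{i,t,a}}$, transfer the target inequality to the true-utility scale, recover a true-utility gap of size at least $\Delta_a$, and absorb the Lemma~2 slack using the single-swap width of the $K$-subset decision class $\mathcal{M}_K$.

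Concretely, by Lemma~\ref{lem:sets} applied with $M = M_{i,t}$ together with the pair $(a,\cdot)$ defining $\tilde{M}_{i,t,a}$, $\supp(\boldsymbol{\chi}) \subseteq [n] \setminus (A_{i,t} \cup B_{i,t})$, so Lemma~2 yields
\[
\hat{u}_{i,t}(M_{i,t}) - \hat{u}_{i,t}(\tilde{M}_{i,t,a}) > \bigl(u(M_{i,t}) - u(\tilde{M}_{i,t,a})\bigr) - \tfrac{\Delta_{(r)}}{6}\|\boldsymbol{\chi}\|_1.
\]
Since $\mathcal{M}_K$ is the class of all $K$-subsets, re-running the constrained oracle with one extra singleton constraint returns a set that differs from $M_{i,t}$ by a single swap (move $a$ across the partition and promote/demote the empirically best compensating active arm), so $\|\boldsymbol{\chi}\|_1 = 2$ and the slack is at most $\Delta_{(r)}/3$.

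The heart of the argument is to show $u(M_{i,t}) - u(\tilde{M}_{i,t,a}) \geq \Delta_a \geq \Delta_{(r)}$. The lemma's hypothesis $A_{i,t} \subseteq M_*$, $B_{i,t} \cap M_* = \varnothing$ makes $M_*$ feasible for the $M_{i,t}$-oracle, so $\hat{u}_{i,t}(M_{i,t}) \geq \hat{u}_{i,t}(M_*)$; pushing this through Lemma~2 on $\mathbf{1}_{M_{i,t}} - \mathbf{1}_{M_*}$ together with the single-swap width bound gives $u(M_{i,t}) \geq u(M_*) - \Delta_{(r)}/3$. I then case-split on the side $\tilde{M}_{i,t,a}$ places $a$ relative to $M_*$. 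In the direct sub-case where $\tilde{M}_{i,t,a}$ forces $a$ opposite to $M_*$, the gap-score definition of Eq.~\ref{eq:gap_score} applied to the feasible $\tilde{M}_{i,t,a}$ gives $u(\tilde{M}_{i,t,a}) \leq u(M_*) - \Delta_a$; combining with the previous estimate yields $u(M_{i,t}) - u(\tilde{M}_{i,t,a}) \geq \Delta_a \geq \Delta_{(r)}$. Chaining the three ingredients gives $\hat{u}_{i,t}(M_{i,t}) - \hat{u}_{i,t}(\tilde{M}_{i,t,a}) > \Delta_{(r)} - \Delta_{(r)}/3 = \tfrac{2}{3}\Delta_{(r)}$.

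The main obstacle is the complementary ``crossed'' sub-case in which $\tilde{M}_{i,t,a}$ agrees with $M_*$ on $a$ while $M_{i,t}$ disagrees with $M_*$ on $a$. Here Eq.~\ref{eq:gap_score} bounds $u(M_{i,t})$ from above rather than $u(\tilde{M}_{i,t,a})$, and the direct route collapses; one must re-anchor the gap argument on the single-swap witness relative of $M_{i,t}$, chaining an additional application of Lemma~2 through an exchange argument that relies on the multi-stage invariant $A_{i,t} \subseteq M_*$, $B_{i,t} \cap M_* = \varnothing$ being carried inductively from earlier stages. This is the analogue, in the tiered-interview setting, of the cross-case in the CSAR proof of~\citet{Chen14:Combinatorial}, and verifying that their single-stage exchange still closes here is the most technically delicate step.
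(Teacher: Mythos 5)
The paper does not actually prove this lemma: it states it and writes ``Lemma~\ref{lem:2_3} is due to \citet{Chen14:Combinatorial},'' so the reference argument is the corresponding lemma in the CSAR analysis, which the surrounding machinery (the inner-product lemma and Lemma~\ref{lem:sets}) is set up to support. Measured against that argument, your proposal has the right toolkit but a genuine gap at its central step. The claim that you can ``push $\hat{u}_{i,t}(M_{i,t})\geq\hat{u}_{i,t}(M_*)$ through Lemma~2 on $\mathbf{1}_{M_{i,t}}-\mathbf{1}_{M_*}$ together with the single-swap width bound'' to get $u(M_{i,t})\geq u(M_*)-\Delta_{(r)}/3$ is false: $M_{i,t}$ and $M_*$ need not differ by a single swap, so $\|\mathbf{1}_{M_{i,t}}-\mathbf{1}_{M_*}\|_1$ can be as large as $2(K-|A_{i,t}|)$ and the Lemma~2 slack becomes $\Theta(K)\cdot\Delta_{(r)}$, which destroys the bound. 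Only the pair $(M_{i,t},\tilde{M}_{i,t,a})$ is guaranteed to be a single swap apart. Separately, even granting all your intermediate claims, your slacks double-count: you lose $\Delta_{(r)}/3$ once in $u(M_{i,t})\geq u(M_*)-\Delta_{(r)}/3$ and again in transferring from $\hat{u}$ to $u$, so the chain yields $\Delta_a-\tfrac{2}{3}\Delta_{(r)}\geq\tfrac{1}{3}\Delta_{(r)}$, not the claimed $\tfrac{2}{3}\Delta_{(r)}$.

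The missing idea is to never compare $M_{i,t}$ with $M_*$ at the level of whole sets. Instead, in the case where $\tilde{M}_{i,t,a}$ disagrees with $M_*$ on $a$, take the single exchange $(b_-,b_+)$ with $b_-\subseteq\tilde{M}_{i,t,a}\setminus M_*$, $b_+\subseteq M_*\setminus\tilde{M}_{i,t,a}$, $a\in b_-\cup b_+$, and set $M''=\tilde{M}_{i,t,a}\setminus b_-\cup b_+$. Lemma~\ref{lem:sets} makes $M''$ feasible for the oracle defining $M_{i,t}$, so $\hat{u}_{i,t}(M_{i,t})-\hat{u}_{i,t}(\tilde{M}_{i,t,a})\geq\hat{u}_{i,t}(M'')-\hat{u}_{i,t}(\tilde{M}_{i,t,a})=\langle\hat{\mathbf{u}}_{i,t},\mathbf{1}_{b_+}-\mathbf{1}_{b_-}\rangle$, an inner product with a vector of $\ell_1$-norm $2$ supported on active arms; its true-utility counterpart equals $u(b_+)-u(b_-)\geq\Delta_a$ because the swap inserts an arm of $M_*$ and removes a non-$M_*$ arm (or vice versa) with $a$ among them. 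A single application of Lemma~2 then gives $>\Delta_a-\tfrac{1}{3}\Delta_{(r)}\geq\tfrac{2}{3}\Delta_{(r)}$ with no accumulation of error. Your ``crossed'' sub-case (where $M_{i,t}$ disagrees with $M_*$ on $a$) is not something to close by a further exchange chain; the same one-exchange argument applied to $M_{i,t}$ shows $\hat{u}_{i,t}(M_{i,t})-\hat{u}_{i,t}(\tilde{M}_{i,t,a})<-\Delta_a+\tfrac{1}{3}\Delta_{(r)}<0$, contradicting the feasibility inequality $\hat{u}_{i,t}(M_{i,t})\geq\hat{u}_{i,t}(\tilde{M}_{i,t,a})$, so that case simply cannot occur under $\tau_{i,t}$.
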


Lemma~\ref{lem:2_3} is due to~\citet{Chen14:Combinatorial}.

\begin{lemma}\label{lem:1_3}
Fix any stage $i\in[m]$, and any phade $t\in[K_i]$ such that $\sum_{b=0}^{i-1}K_i+t$ > 0. Suppose that event $\tau_{i,t}$ occurs. Also assume that $A_{i,t}\subseteq M_*$ and $B_{i,t}\cap M_*=\varnothing$. Suppose an active arm $a\in[n]\setminus(A_{i,t}\cap B_{i,t})$ satisfies that $a\in(M_*\cap\neg M_{i,t})\cup(\neq M_*\cap M_{i,t})$. Then, we have
$$
\tilde{u}_{i,t}(M_{i,t})-\tilde{u}_{i,t}(\tilde{M}_{i,t,a}) \leq \frac{1}{3}\Delta_{(n-\sum_{b=0}^{i-1}K_i -t + 1)}
$$
\end{lemma}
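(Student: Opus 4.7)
The plan is a case analysis on which side of the symmetric difference $M_* \triangle M_{i,t}$ the active arm $a$ lies, combined with a single two-element exchange in each case. Write $\Delta \triangleq \Delta_{(n-\sum_{b=0}^{i-1}\tilde{K}_b - t + 1)}$ and let $\hat{w}(M) = w(\hat{\mathbf{u}}_{i,t}, M)$ denote the empirical objective. First observe that $\hat{w}(M_{i,t}) \geq \hat{w}(\tilde{M}_{i,t,a})$ holds automatically from the oracle's optimality (both are feasible for the outer $\coracle$ call, and $\tilde{M}_{i,t,a}$ only adds constraints), so it suffices to produce an upper bound of $\Delta/3$.

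In the first case, $a \in M_* \setminus M_{i,t}$, I would pick any $c \in M_{i,t} \setminus M_*$ --- nonempty by a cardinality count, since $|M_{i,t}| = |M_*| = K$. Such a $c$ is active: $c \in M_{i,t}$ is disjoint from $B_{i,t}$, and $c \notin M_* \supseteq A_{i,t}$. Set $M' = (M_{i,t} \setminus \{c\}) \cup \{a\}$; then $M'$ has size $K$, contains $A_{i,t} \cup \{a\}$, and is disjoint from $B_{i,t}$, hence is feasible for the $\coracle$ call defining $\tilde{M}_{i,t,a}$, so $\hat{w}(\tilde{M}_{i,t,a}) \geq \hat{w}(M')$. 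The vector $\chi_{M_{i,t}} - \chi_{M'}$ has support $\{a,c\}$ (both active) and $\ell_1$-norm exactly $2$, so Lemma 2 yields $\hat{w}(M_{i,t}) - \hat{w}(M') < (w(M_{i,t}) - w(M')) + \Delta/3 = (u(c) - u(a)) + \Delta/3$. Optimality of $M_*$ applied to the swap $(M_* \setminus \{a\}) \cup \{c\} \in \mathcal{M}_K$ forces $u(a) \geq u(c)$, so the right-hand side is at most $\Delta/3$, as claimed.

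The second case, $a \in M_{i,t} \setminus M_*$, is fully symmetric: pick any $c \in M_* \setminus M_{i,t}$ (active for analogous reasons, using $B_{i,t} \cap M_* = \varnothing$), set $M' = (M_{i,t} \setminus \{a\}) \cup \{c\}$, verify $M'$ is feasible for $\tilde{M}_{i,t,a} = \coracle(\hat{\mathbf{u}}_{i,t}, A_{i,t}, B_{i,t} \cup \{a\})$ because $a \notin M'$, and apply Lemma 2 together with the optimality-forced inequality $u(c) \geq u(a)$ coming from the swap $(M_* \setminus \{c\}) \cup \{a\}$.

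The main conceptual step is selecting $M'$ so that its symmetric difference with $M_{i,t}$ has size exactly $2$; this is what collapses the Lemma 2 error term from a trivial $K \cdot \Delta/6$ down to the desired $\Delta/3$, and it is the step that genuinely uses the uniform-matroid structure of $\mathcal{M}_K$. Everything else is bookkeeping: showing the swap partner $c$ exists (counting), showing both endpoints of the swap lie outside $A_{i,t} \cup B_{i,t}$ (this is where the hypotheses $A_{i,t} \subseteq M_*$ and $B_{i,t} \cap M_* = \varnothing$ are actually used), and converting true-utility comparisons between $a$ and $c$ into consequences of $M_*$ being the true optimum via a single-element exchange.
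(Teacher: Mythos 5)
Your proof is correct. The paper does not actually prove this lemma itself --- it simply states ``Lemma~\ref{lem:1_3} is due to \citet{Chen14:Combinatorial}'' --- so the relevant comparison is with the general argument in that cited work. There, the claim is proved for arbitrary decision classes via \emph{exchange classes}: one selects an exchange set $b=(b_+,b_-)$ straddling $M_*\triangle M_{i,t}$ with $a$ in the appropriate component, shows $M_{i,t}\oplus b$ is feasible for the constrained oracle call defining $\tilde{M}_{i,t,a}$, and pays a concentration error of $\frac{\Delta}{3\,\width(\mathcal{M})}\cdot\|\chi_{b_+}-\chi_{b_-}\|_1$. Your single-swap construction is exactly the specialization of that machinery to the uniform matroid $\mathcal{M}_K$, where every exchange set is a pair $(\{c\},\{a\})$ of width $2$; this is why the $\Delta/6$ in the definition of $\tau_{i,t}$ turns into the $\Delta/3$ in the conclusion, and your accounting of that factor is right. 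All the supporting checks go through: the swap partner $c$ exists by $|M_{i,t}|=|M_*|=K$, both endpoints are active precisely because $A_{i,t}\subseteq M_*\,$, $B_{i,t}\cap M_*=\varnothing$, $A_{i,t}\subseteq M_{i,t}$, and $B_{i,t}\cap M_{i,t}=\varnothing$, and the sign condition $u(a)\ge u(c)$ (resp.\ $u(c)\ge u(a)$) follows from optimality of $M_*$ under a single-element exchange --- note this last step uses that $w$ is linear, i.e.\ $w(M)=\sum_{a\in M}u(a)$, which is an explicit hypothesis of Theorem~\ref{thm:bcut}, so nothing is lost. The only thing your write-up gains over the citation is self-containedness; the only thing it loses is generality beyond $\mathcal{M}_K$, which the paper never needs.
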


Lemma~\ref{lem:1_3} is due to~\citet{Chen14:Combinatorial}.

Now we can prove Theorem~\ref{thm:bcut}, restated below for clarity.

\begin{theorem}\label{thm:bcut_app}
Given any $\bar{T}_i$s such that $\sum_{i\in[m]}\bar{T}_i=\bar{T} > n$, any decision class $\mathcal{M}_K\subseteq2^{[n]}$, any linear function $w$, and any true expected rewards $\mathbf{u}\in\mathbb{R}^n$, assume that reward distribution $\varphi_a$ for each arm $a\in[n]$ has mean $u(a)$ with a $\sigma$-sub-Gaussian tail. Let $\Delta_{(1)},\ldots,\Delta_{(n)}$ be a permutation of $\Delta_1,\ldots,\Delta_n$ (defined in Eq. \ref{eq:gap_score}) such that $\Delta_{(1)}\leq\ldots\leq\Delta_{(n)}$. Define $\tilde{\mathbf{H}}\triangleq\max_{i\in[n]}i\Delta_{(i)}^{-2}$. Then, Algorithm~\ref{alg:bug_cut} uses at most $\bar{T}_i$ samples per stage $i\in[m]$ and outputs a solution $\out\in\mathcal{M}_K\cup\{\perp\}$ such that
{\small
\begin{equation}
\Pr[\out\neq M_*]\leq  n^2\exp\left(-\frac{\sum_{b=1}^m s_b(\bar{T}_b-\tilde{K}_b)/(j_b\tlog(\tilde{K}_b))}{72\sigma^2\tilde{\mathbf{H}}}\right)
\end{equation}}%
where $\tlog(n)\triangleq\sum_{i=1}^ni^{-1}$, and $M_*=\argmax_{M\in\mathcal{M}_K}w(M)$.
\end{theorem}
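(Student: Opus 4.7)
\textbf{Proof proposal for Theorem~\ref{thm:bcut_app}.} The plan is to follow the template of \citet{Chen14:Combinatorial}'s CSAR analysis, but carried out on the doubly indexed state $(i,t)$ where $i$ ranges over stages and $t$ over the phases within a stage. I would first verify the budget constraint: in stage $i$, each active arm $a$ receives $\tilde{T}_{i,t}-\tilde{T}_{i,t-1}$ additional pulls in phase $t$, so a routine telescoping calculation with the ceiling in Line~\ref{ln:alg2_1} and the definition of $\tlog$ shows the total cost does not exceed $\bar{T}_i$. With the budget accounting out of the way, the correctness argument conditions on the high-probability event $\tau=\bigcap_{i,t}\tau_{i,t}$ from Lemma~\ref{lem:tau}, which already supplies the $1-n^2\exp(\cdots)$ factor stated in the theorem.

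Under $\tau$, I would prove by induction on the lexicographically ordered pairs $(i,t)$ the invariant
\begin{equation*}
A_{i,t}\subseteq M_* \quad\text{and}\quad B_{i,t}\cap M_*=\varnothing.
\end{equation*}
The base case $A_{0,1}=B_{0,1}=\varnothing$ is immediate, and the pass from the end of stage $i{-}1$ to the start of stage $i$ is just a renaming. For the inductive step at $(i,t)$, assume the invariant holds. Since $M_*$ is feasible for the constrained oracle, Lemma~\ref{lem:sets} lets me freely swap active arms in and out when computing pessimistic comparisons. I would then show by contradiction that the chosen $p_{i,t}$ cannot be misclassified in $M_{i,t}$ relative to $M_*$. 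Specifically, if any active arm $a\in (M_*\setminus M_{i,t})\cup(M_{i,t}\setminus M_*)$ existed, Lemma~\ref{lem:1_3} would bound its gap by $\tfrac{1}{3}\Delta_{(n-\sum_{b=0}^{i-1}\tilde{K}_b-t+1)}$, while Lemma~\ref{lem:2_3} would guarantee that the maximizing arm $p_{i,t}$ achieves a gap strictly larger than $\tfrac{2}{3}\Delta_{(n-\sum_{b=0}^{i-1}\tilde{K}_b-t+1)}$; since $p_{i,t}$ attains the maximum gap, $p_{i,t}$ itself cannot lie in the misclassified set. Hence $p_{i,t}\in M_{i,t}\Leftrightarrow p_{i,t}\in M_*$, and the update on Lines~\ref{alg:AR_start}--\ref{alg:AR_end} preserves the invariant.

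Applying the invariant at the terminal state $(m,\tilde{K}_m+1)$, I have $A_{m,\tilde{K}_m+1}\subseteq M_*$ and $B_{m,\tilde{K}_m+1}\cap M_*=\varnothing$; since the algorithm has made exactly $\sum_i \tilde{K}_i = n$ decisions in total and $|M_*|=K$, we necessarily get $|A_{m,\tilde{K}_m+1}|=K$, forcing $\out=A_{m,\tilde{K}_m+1}=M_*$. Combining with Lemma~\ref{lem:tau} yields the stated bound on $\Pr[\out\neq M_*]$, and the $\out=\perp$ branch is only taken when the constrained oracle is infeasible, which under $\tau$ and the invariant cannot occur because $M_*$ itself is always a feasible completion.

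The main obstacle I expect is the careful bookkeeping across stage boundaries: making sure that the phase index $t$, the running count $\sum_{b=0}^{i-1}\tilde{K}_b$, and the ``active arm'' set $[n]\setminus(A_{i,t}\cup B_{i,t})$ line up so that Lemmas~\ref{lem:2_3} and \ref{lem:1_3} can be invoked with the correct order statistic $\Delta_{(n-\sum_{b=0}^{i-1}\tilde{K}_b-t+1)}$. Once that indexing is pinned down, the gap-comparison contradiction is the same one used in the single-stage CSAR proof, and the multi-stage structure only affects the budget telescoping already handled by the form of $\tilde{T}_{i,t}$.
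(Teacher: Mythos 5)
Your proposal is correct and follows essentially the same route as the paper's proof: budget verification by telescoping the $\tilde{T}_{i,t}$ increments, conditioning on the event $\tau$ from Lemma~\ref{lem:tau}, a lexicographic induction on $(i,t)$ maintaining $A_{i,t}\subseteq M_*$ and $B_{i,t}\cap M_*=\varnothing$, and the $\tfrac{1}{3}$-versus-$\tfrac{2}{3}$ gap contradiction via Lemmas~\ref{lem:1_3} and~\ref{lem:2_3} applied to the order statistic $\Delta_{(n-\sum_{b=0}^{i-1}\tilde{K}_b-t+1)}$. The only cosmetic difference is that the paper first exhibits a specific high-gap active arm $e_{i,t}$ by a counting argument and then compares it to $p_{i,t}$, whereas you fold that step into the observation that $p_{i,t}$ maximizes the gap; the substance is identical.
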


\begin{proof}
First we show that the algorithm takes at most $\bar{T}_i$ samples in every stage $i\in[m]$. It is easy to see that exactly one arm is pulled for $\tilde{T}_i,1$ times in stage $i$, one arm is pulled for $\tilde{T}_i,2$ times in stage $i$, $\ldots$, and one arm is pulled for $\tilde{T}_i,\tilde{K}_i-1$ times in stage $i$. Therefore, the total number of samples used by the algorithm in stage $i\in[m]$ is bounded by
\begin{align*}
j_i\sum_{t=1}^{\tilde{K}_i}\tilde{T}_i,t &\leq j_i\sum_{t=1}^{\tilde{K}_i}\left(\frac{\bar{T}_i-\tilde{K}_i}{\tlog(\tilde{K}_i)j_i(\tilde{K}_i-t+1)}+1\right)\\
&=\frac{(\bar{T}_i-\tilde{K}_i)j_i}{\tlog(\tilde{K}_i)j_i}\tlog(\tilde{K}_i)+\tilde{K}_i\\
&=\bar{T}_i. 
\end{align*}

By Lemma~\ref{lem:tau}, we know that the event $\tau$ occurs with probability at least $1 - n^2\exp\left(-\frac{\sum_{b=1}^m s_b(\bar{T}_b-\tilde{K}_b)/(j_i\tlog(\tilde{K}_i))}{72\sigma^2\tilde{\mathbf{H}}}\right)$. Therefore, we only need to prove that, under event $\tau$, the algorithm outputs $M_*$. Assume that the event $\tau$ occurs in the rest of the proof.

We will use induction. Fix a stage $i\in[m]$ and phase $t\in[\tilde{K}_i]$. Suppose that the algorithm does not make any error before stage $i$ and phase $t$, i.e. $A_{i,t}\subseteq M_*$ and $B_{i,t}\cap M_*=\varnothing$. We will show that the algorithm does not err at stage $i$, phase $t$.

At the beginning of phase $t$ in stage $i$ there are exactly $\sum_{b=0}^{i-1}\tilde{K}_i+t-1$ inactive arms $|A_{i,t}\cup B_{i,t}|=\sum_{b=0}^{i-1}\tilde{K}_i+t-1$. Therefore there must exist an active arm $e_{i,t}\in[n]\setminus(A_{i,t}\cup B_{i,t})$ such that $\Delta_{e_{i,t}}\geq\Delta_{(n-\sum_{b=0}^{i-1}\tilde{K}_i-t+1)}$. Hence, by Lemma~\ref{lem:2_3}, we have
\begin{equation}\label{eq:thm2_1}
\tilde{w}_{i,t}(M_{i,t})-\tilde{w}_{i,t}(M_{i,t,e_{i,t}}) \geq \frac{2}{3}\Delta_{(n-\sum_{b=0}^{i-1}\tilde{K}_i-t+1)}.
\end{equation}
Notice that the algorithm makes an error in phase $t$ in stage $i$ if and only if it accepts an arm $p_{i,t}\notin M_*$ or rejects an arm $p_{i,t}\in M_*$. On the other hand, arm $p_{i,t}$ is accepted when $p_{i,t}\in M_{i,t}$ and is rejected when $p_{i,t}\notin M_{i,t}$. Therefore, the algorithm makes an error in phase $t$ in stage $i$ if and only if $p_t\in(M_*\cap\neg M_{i,t})\cup(\neg M_* \cap M_{i,t})$.

Suppose that $p_t\in(M_*\cap\neg M_{i,t})\cup(\neg M_* \cap M_{i,t})$. Using Lemma~\ref{lem:1_3}, we see that
\begin{equation}\label{eq:thm2_2}
\tilde{w}_{i,t}(M_{i,t})-\tilde{w}_{i,t}(\tilde{M}_{i,t,p_{i,t}}) \leq \frac{1}{3}\Delta_{(n-\sum_{b=0}^{i-1}\tilde{K}_i-t+1)}.
\end{equation}
By combining Eq.~\ref{eq:thm2_1} and Eq.~\ref{eq:thm2_2}, we see that
\begin{align}\label{eq:thm2_3}
&\tilde{w}_{i,t}(M_{i,t})-\tilde{w}_{i,t}(\tilde{M}_{i,t,p_{i,t}}) \\ &\leq \frac{1}{3}\Delta_{(n-\sum_{b=0}^{i-1}\tilde{K}_i-t+1)} \\ & < \frac{2}{3}\Delta_{(n-\sum_{b=0}^{i-1}\tilde{K}_i-t+1)} \\ & \leq \tilde{w}_{i,t}(M_{i,t})-\tilde{w}_{i,t}(\tilde{M}_{i,t,e_{i,t}})
\end{align}
However, Eq.~\ref{eq:thm2_3} is contradictory to the definition of $p_{i,t}\triangleq\argmax_{e\in[n]\setminus(A_{i,t}\cup B_{i,t})}\tilde{w}_{i,t}(M_{i,t})-\tilde{w}_{i,t}(M_{i,t,e})$. This proves that $p_t\notin(M_*\cap\neg M_{i,t})\cup(\neg M_*\cap M_{i,t})$. This means that the algorithm does not err at phase $t$ in stage $i$, or equivalently $A_{i,t+1}\subseteq M_*$ and $B_{i,t+1}\cap M_*=\varnothing$.

Hence we have $A_{m,\tilde{K}_{m+1}}\subseteq M_*$ and $B_{m,\tilde{K}_{m+1}}\subseteq\neg M_*$ in the final phase of the final stage. Notice that $|A_{m,\tilde{K}_{m+1}}|+|B_{m,\tilde{K}_{m+1}}|=n$ and $A_{m,\tilde{K}_{m+1}}\cap B_{m,\tilde{K}_{m+1}}=\varnothing$. This means that $A_{m,\tilde{K}_{m+1}}=M_*$ and $B_{m,\tilde{K}_{m+1}}=\neg M_*$. Therefore the algorithm outputs $\out=A_{m,\tilde{K}_{m+1}}=M_*$ after phase $\tilde{K}_m$ in stage $m$.
\end{proof}

\section{Visualization of \cut{} bound}

Figure~\ref{fig:cut_T_app} shows how the theoretical bound defined in Theorem~\ref{thm:scuc} changes as parameters change vs. Hardness $\mathbf{H}$ defined in Equation~\ref{eq:hardness}.
\begin{figure}
  \begin{center}
  	\begin{subfigure}[b]{0.3\columnwidth}
        \includegraphics[width=\textwidth]{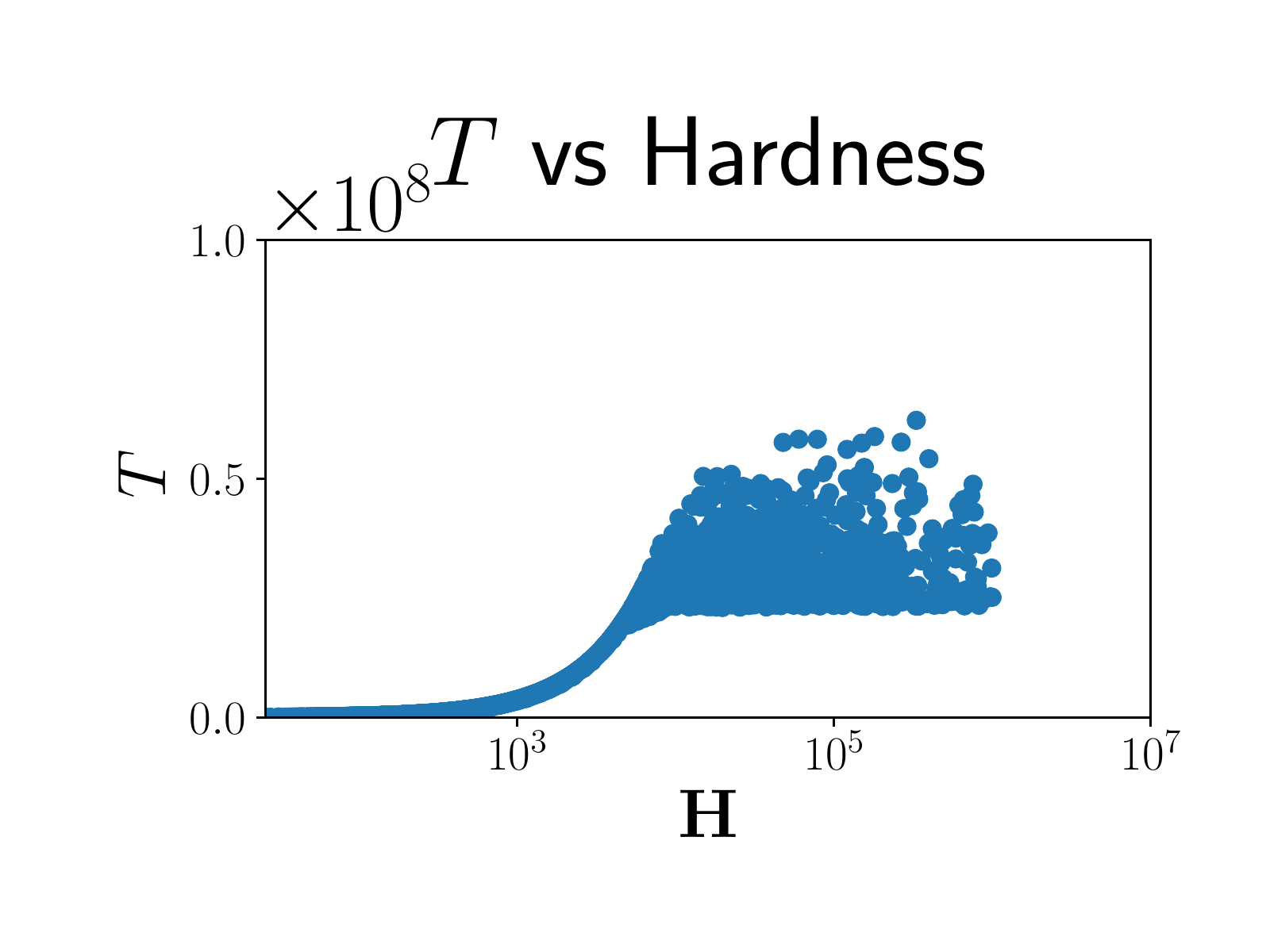}
        \caption{$\delta$: 0.075, $\epsilon$: 0.075}
   	\end{subfigure}
    \begin{subfigure}[b]{0.30\columnwidth}
  		\includegraphics[width=\textwidth]{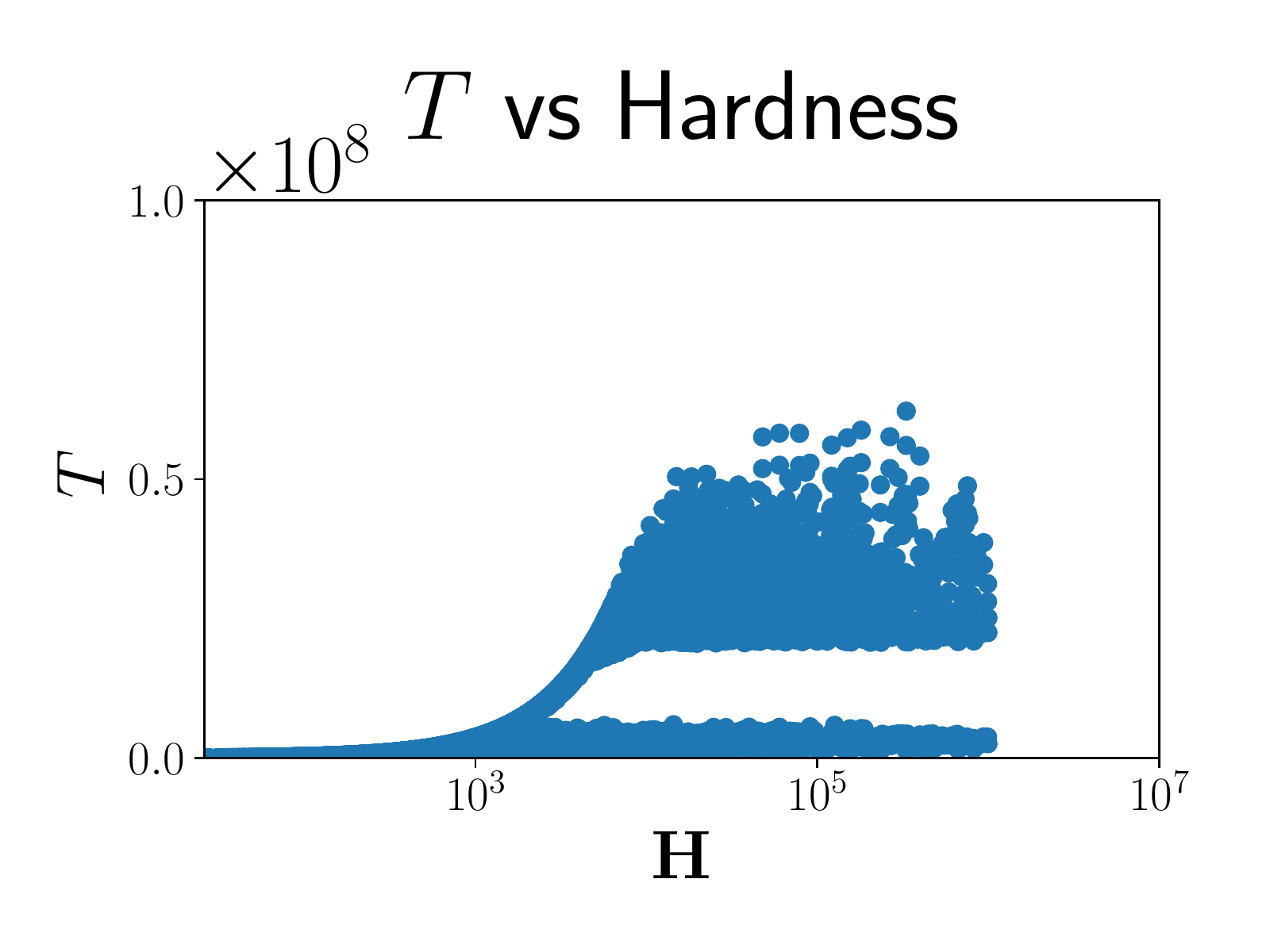}
  	    \caption{$\delta$: 0.3, $\epsilon$: 0.075}
    \end{subfigure}
    \begin{subfigure}[b]{0.30\columnwidth}
  		\includegraphics[width=\textwidth]{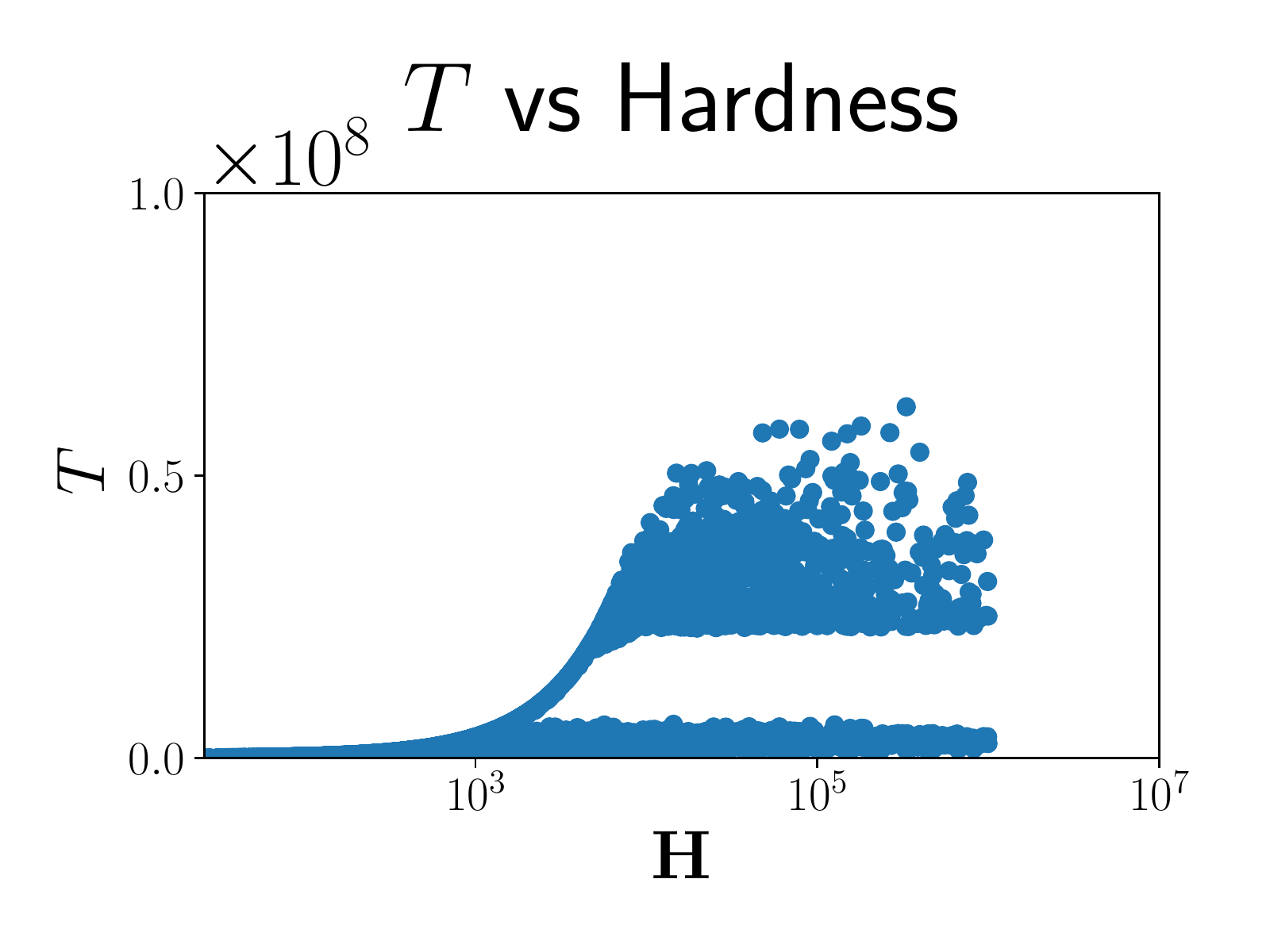}
  		\caption{$\delta$: 0.075, $\epsilon$: 0.3}
    \end{subfigure}
  \end{center}
  
  \caption{Hardness ($\mathbf{H}$) vs theoretical cost ($T$) as user-specified parameters to the \cut{} algorithm.}

  \vspace{-10pt}
  \label{fig:cut_T_app}
\end{figure}

\section{Experimental Setup}
The machines used for the experiments had 32GB RAM, 8 Intel SandyBridge CPU cores, and were initialized with Red Hat Enterprise Linux 7.3. A single run of SWAP over the graduate admissions data takes about 1 minute depending on the parameters.
\begin{table}[h]
    \centering
    \begin{tabular}{l c}\toprule
         Parameter & Range \\ \midrule
         $\delta$ & 0.3,0.2,0.1,0.075,0.05 \\
         $\epsilon$ & 0.3,0.2,0.1,0.075,0.05 \\
         $\sigma$ & 0.1,0.2\\
         $j$ & 6 \\
         $s$ & $7,\ldots,20$ \\
         \bottomrule
    \end{tabular}
    \caption{Parameters for graduate admissions experiments}
    \label{tab:my_label}
\end{table}

\section{Additional Experimental Results}\label{app:extra-results}
In this section, we present additional experimental results for \cut{} and \brutas{}.  Table~\ref{tab:toy_baseline} supports the Gaussian simulation experiments of Section~\ref{sec:gaussian}, spcefically, the comparison of \cut{} and \brutas{} to two baseline pulling strategies.

\begin{table}[h!]
\centering
\begin{tabular}{lrr} \toprule
Algorithm & Cost & Utility\\ \midrule
Random & 2750 & 138.9 (5.1) \\
Uniform & 2750 & 178.4 (0.2) \\
\cut & 2609 & 231.0 (0.1) \\
\brutas & 2750 & 244.0 (0.1) \\ \bottomrule
\end{tabular}
\caption{Comparing \cut\ and \brutas\ to the baseline of Uniform and Random}
\label{tab:toy_baseline}
\end{table}

Table~\ref{tab:vary_delta} also supports the Gaussian simulation experiments from Section~\ref{sec:gaussian}.  Here, we vary $\delta$ instead of $\epsilon$, as was done in Figure~\ref{fig:cut_eps} of the main paper.  As expected, when $\delta$ increases, the cost decreases. However, the magnitude of the effect is smaller than the effect from decreasing $\epsilon$ or varying $K_1$.  This is also expected, as discussed in the final paragraphs of Section~\ref{sec:theory-pac}, and shown in Figure~\ref{fig:cut_theoretical_hardness}.

\begin{table}[h!]
\centering
\begin{tabular}{lrrrr} \toprule
 $\delta$ & \multicolumn{4}{c}{Cost}\\
 & $K_1=10$ & $K_1=13$ & $K_1=18$ & $K_1=29$ \\\midrule

 0.050 & 552.475 & 605.250 & 839.525 & 1062.725 \\
 0.075 & 542.425 & 582.675 & 827.025 & 1040.700\\
0.100 & 537.175 & 587.900 & 820.575 & 1078.975 \\
  0.200 & 503.650 & 568.300 & 801.525 & 1012.550\\
    \bottomrule
\end{tabular}
\caption{Cost for \cut\ over various $\delta$, for $\epsilon=0.05,\sigma=0.20,s_2=7$}
\label{tab:vary_delta}
\end{table}

Figure~\ref{fig:cut_sig} shows that, as the standard deviation $\sigma$ of the Gaussian distribution from which rewards are drawn increases, so too does the total cost of running \cut{}.  The qualitative behavior shown in, e.g., Figure~\ref{fig:cut_eps} of the main paper remains: as information gain $s$ increases, overall cost decreases; as $s$ increases substantially, we see a saturation effect; and, as final cohort size $K$ increases, overall cost increses.

\begin{figure}
  \begin{center}
  	\begin{subfigure}[b]{0.40\columnwidth}
    \includegraphics[width=\textwidth]{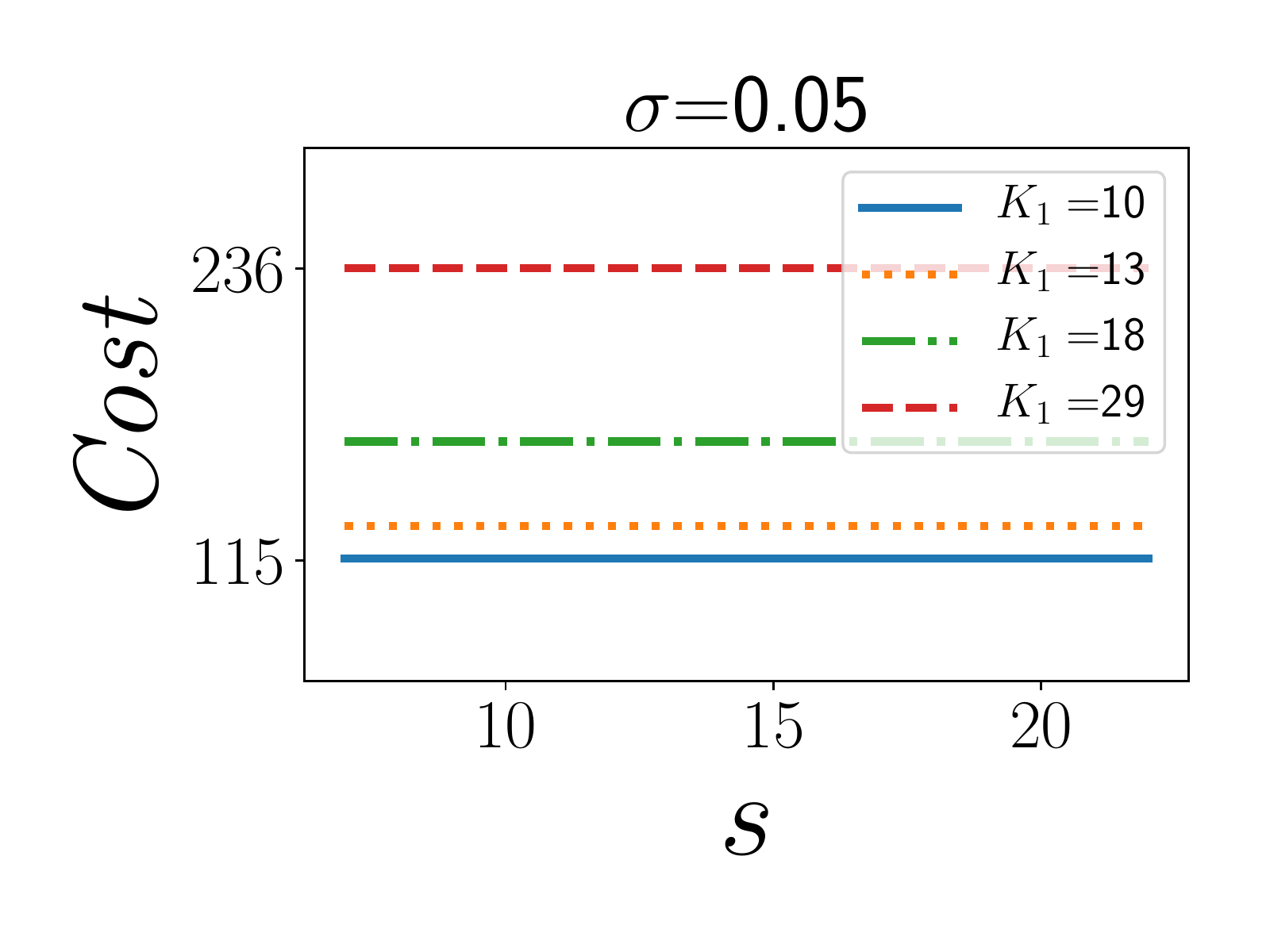}
   	\end{subfigure}
    \begin{subfigure}[b]{0.40\columnwidth}
  		\includegraphics[width=\textwidth]{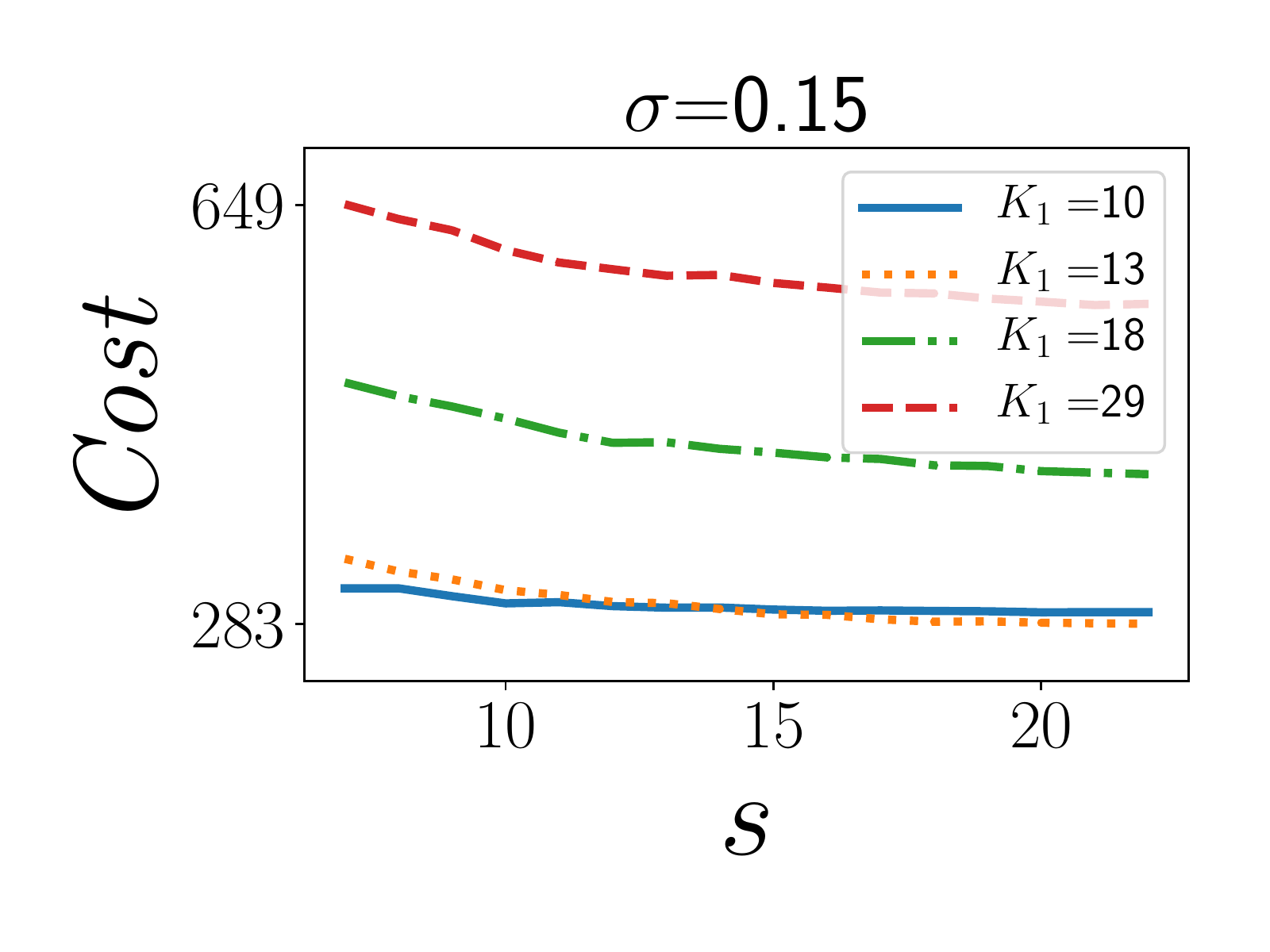}
    \end{subfigure}
    \begin{subfigure}[b]{0.40\columnwidth}
  		\includegraphics[width=\textwidth]{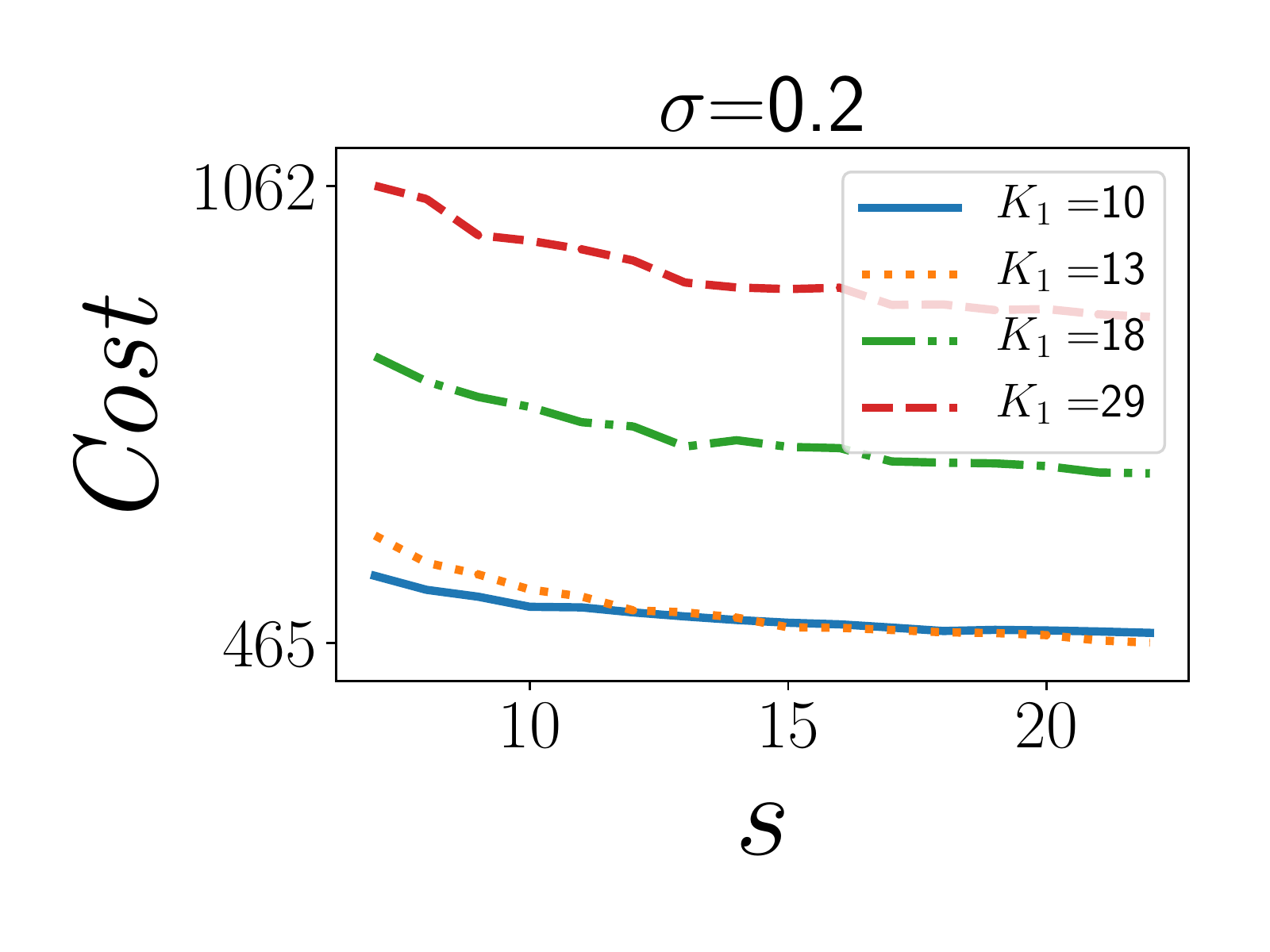}
    \end{subfigure}
    \begin{subfigure}[b]{0.40\columnwidth}
  		\includegraphics[width=\textwidth]{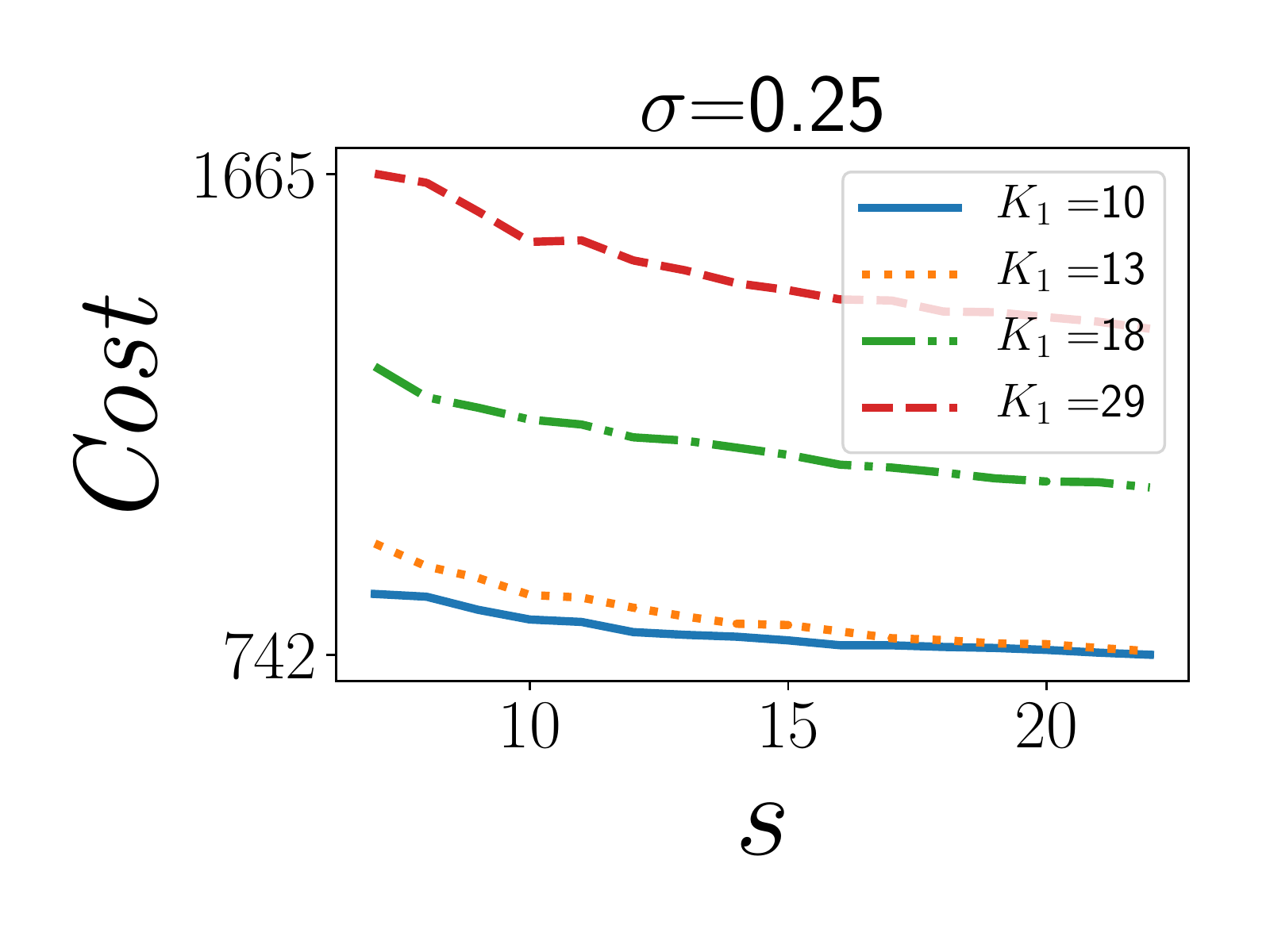}
    \end{subfigure}
  \end{center}
  
  \caption{Comparison of $\cost$ over information gain ($s$) as $\sigma$ increases for \cut. Here, $\delta=0.05$ and $\epsilon=0.05$.}

  \vspace{-10pt}
  \label{fig:cut_sig}
\end{figure}

Figure~\ref{fig:cut_seed} shows the behavior of \cut{} for different arm initializations, representing different utilities and groupings. We chose 4 representative initializations. For most initializations, when $K_1=10$, higher values of $s_2$ do not result in gains. This is because with $K_1=10$ and $K=7$, there are only 3 decisions to make on which arms to cut and the information gain from the initial pull of all arms in stage 2 grants enough information, thus no additional pulls need to be made and cost is uniform across $s_2$. However, if the problem of selecting from the short list is hard enough, additional resources must be spent to narrow the decisions down, as in the top left graph, where total costs decrease as $s_2$ increases for $K_1=10$ because additional pulls need to be made after the initial pulls of remaining arms in stage 2. This reflects real life well: usually, the short list can be cut down with one additional round of (more informative) interviews. However, in rare situations, some candidates are so close to each other that additional assessments need to be made about them. Another interesting result is that $K_1=10$ is not always the most cost effective option. If many of the initial candidates are close together in utility, it will be hard to narrow it down to a final 10 based on resume review alone: more candidates should be allowed to move onto the next round which has higher information gain. This can be seen in the bottom right graph.

\begin{figure}
  \begin{center}
  	\begin{subfigure}[b]{0.40\columnwidth}
  		\includegraphics[width=\textwidth]{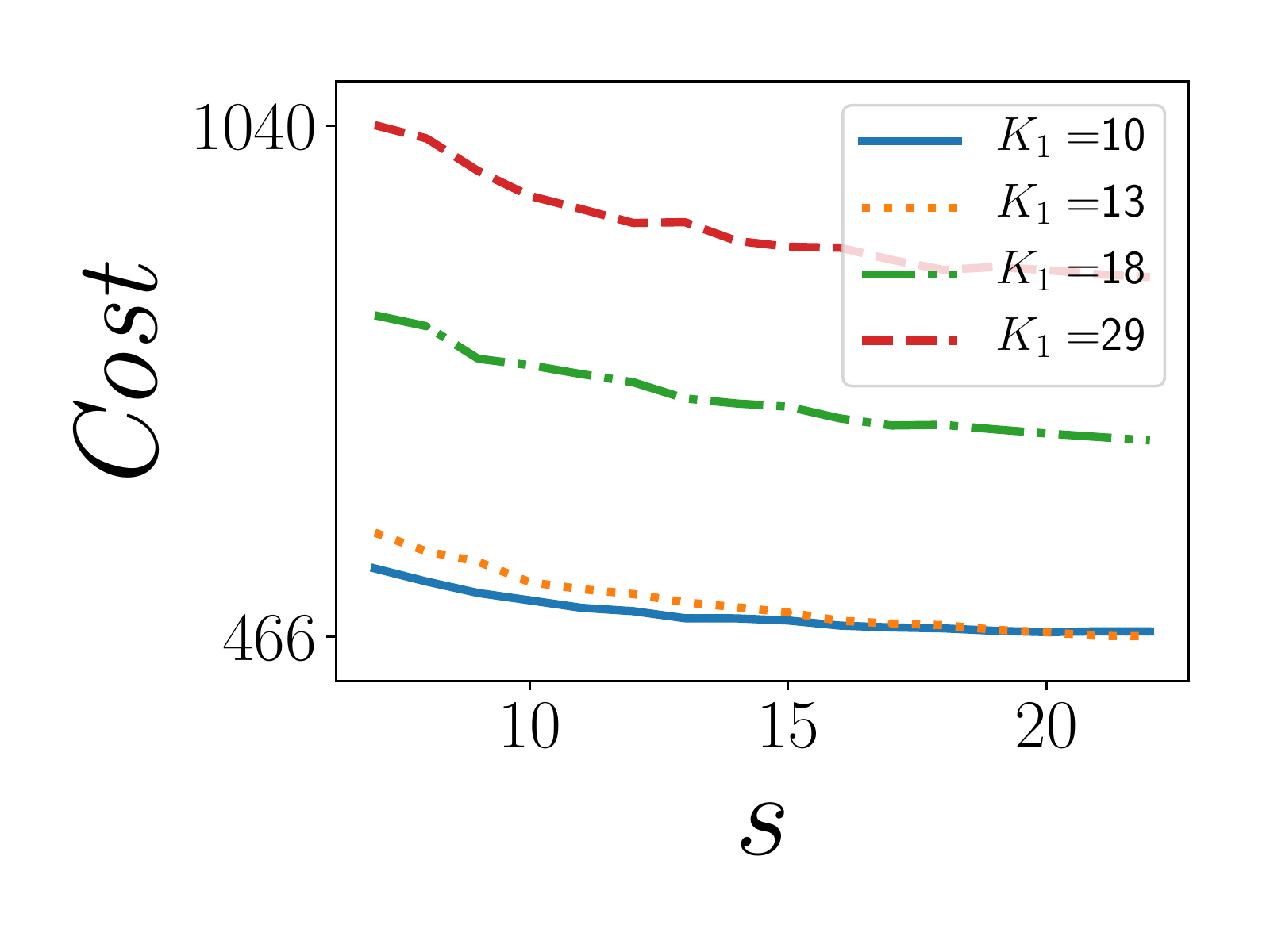}
    \end{subfigure}
    \begin{subfigure}[b]{0.40\columnwidth}
    	\includegraphics[width=\textwidth]{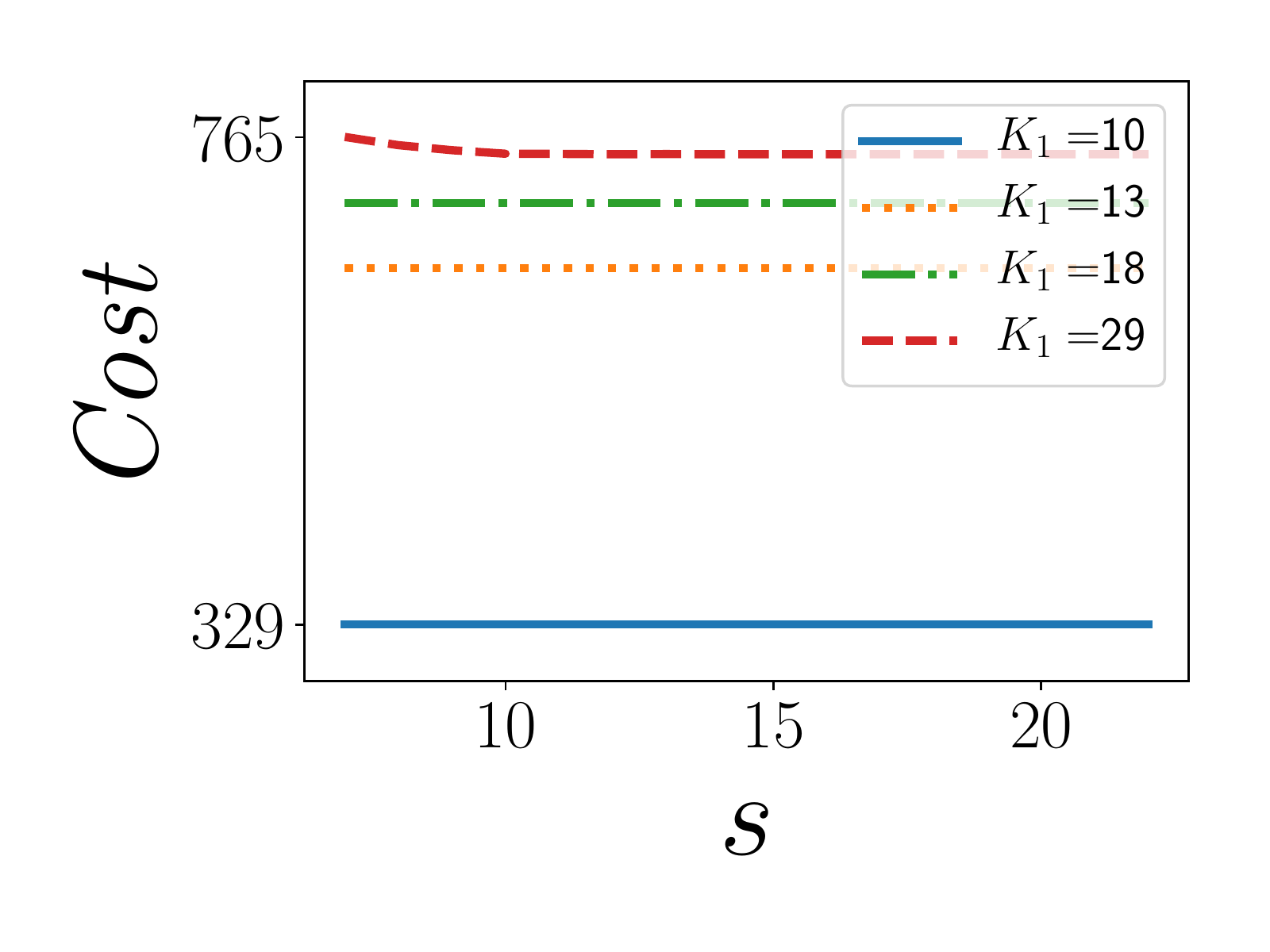}
   	\end{subfigure}
  	\begin{subfigure}[b]{0.40\columnwidth}
    	\includegraphics[width=\textwidth]{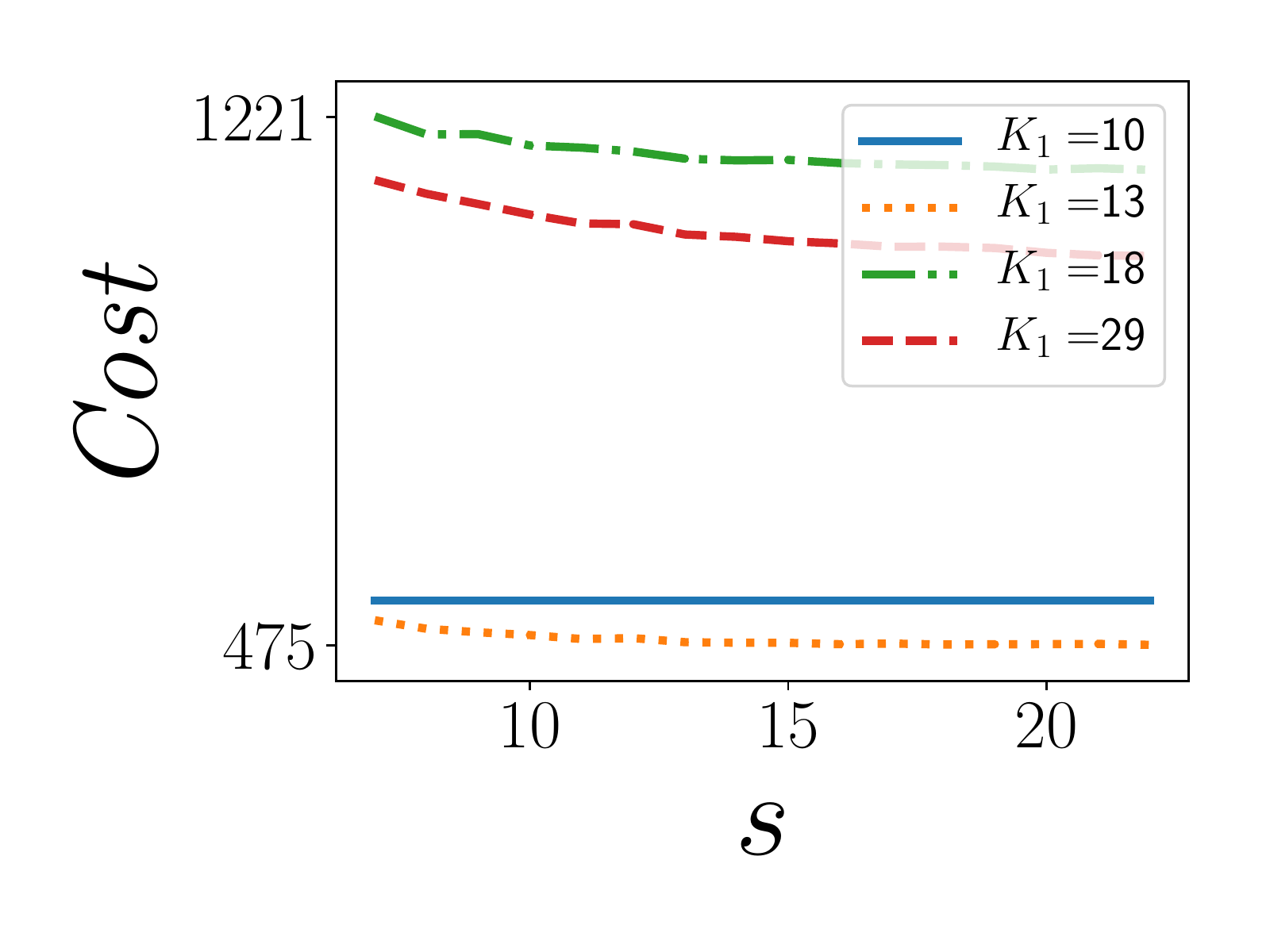}
   	\end{subfigure}
  	\begin{subfigure}[b]{0.40\columnwidth}
    	\includegraphics[width=\textwidth]{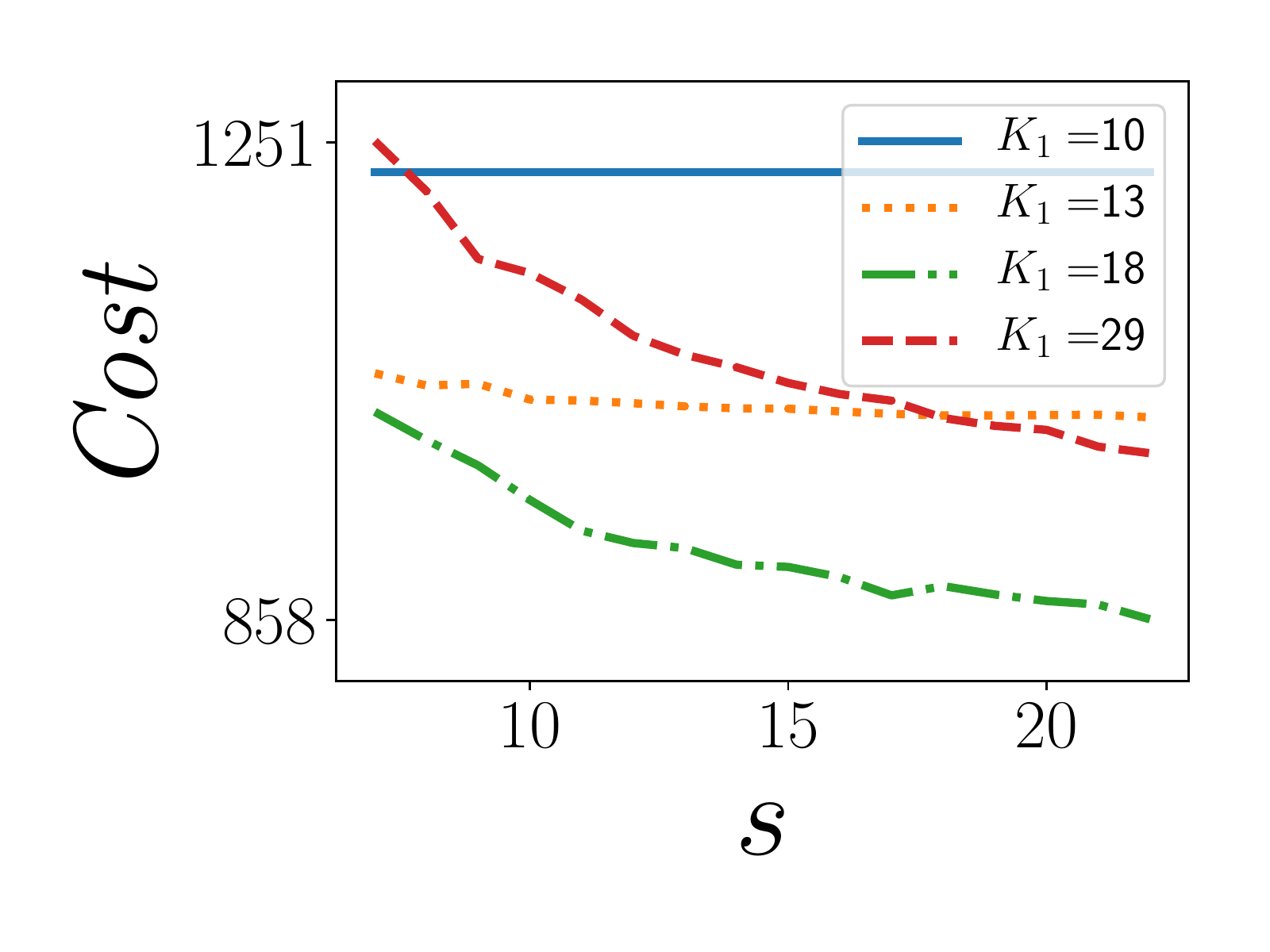}
   	\end{subfigure}
  \end{center}
  
  \caption{Comparison of $\cost$ over information gain ($s$) for different sets of arms for \cut. Here, $\delta=0.075$, $\epsilon=0.05$, $\sigma=0.2$.}

  \vspace{-10pt}
  \label{fig:cut_seed}
\end{figure}

\begin{table*}[th]
\centering
\begin{tabular}{ l  l | c  c  c  c  } \toprule
  & Experiment & Cost & $\wtop$ & $\wdiverse$  & $\wdiverse$  \\
  &  &  &  & over Gender & over Region of Origin \\
  \midrule
  Actual & -- & \textasciitilde \num{2000} & \num{60.9} & \num{10.1} & \num{17.9} \\
  \midrule
  \multirow{3}{*}{\randalg{}} & lower & \num{1359} & \num{40.2} (\num{0.3}) & \num{9.7} (\num{0.2}) & \num{16.9} (\num{0.3}) \\
  & \textasciitilde equivalent & \num{2277} & \num{43.6} (\num{0.5}) & \num{9.9} (\num{0.1}) & \num{17.2} (\num{0.2}) \\
  & higher & \num{11556} & \num{72.9} (\num{4.9}) & \num{11.5} (\num{0.1}) & \num{18.1} (\num{3.5}) \\
  \midrule
  \multirow{3}{*}{\unifalg{}} & lower &  \num{1359} & \num{49.7} (\num{0.3}) & \num{9.8} (\num{0.1}) & \num{17.7} (\num{0.1}) \\
  & \textasciitilde equivalent &  \num{2277} & \num{54.7} (\num{0.3}) & \num{9.9} (\num{0.2}) & \num{18.3} (\num{0.4}) \\
  & higher & \num{11556} & \num{79.5} (\num{3.2}) & \num{11.9} (\num{0.3}) & \num{19.6} (\num{0.6}) \\
  \midrule
  \multirow{3}{*}{\swapalg{}} & lower &  \num{1400} \num{1500} & \num{58.7} (\num{0.5}) & \num{10.1} (\num{0.1}) & \num{19.0} (\num{0.1}) \\
  & \textasciitilde equivalent &  \num{1900}--\num{2000} & \num{60.2} (\num{0.4}) & \num{10.5} (\num{0.1}) & \num{19.1} (\num{0.1}) \\
  & higher & \num{2500}--\num{2700} & \num{61.5} (\num{0.5}) & \num{10.8} (\num{0.2}) & \num{19.3} (\num{0.1}) \\
  \midrule
  \multirow{3}{*}{\cut} & lower & \num{1400}--\num{1460} & \num{61.1} (\num{0.1}) & \num{10.1} (\num{0.2}) & \num{18.9} (\num{0.1}) \\
  & \textasciitilde equivalent & \num{1950}--\num{1990} & \num{78.7} (\num{0.2}) & \num{10.7} (\num{0.1}) & \num{19.4} (\num{0.2}) \\
  & higher & \num{2500}--\num{2700} & \num{80.1} (\num{0.4}) & \num{12.0} (\num{0.3}) & \num{19.8} (\num{0.3}) \\
  \midrule
  \multirow{3}{*}{\brutas} & lower & \num{1649} & \num{61.2} (\num{0.2}) & \num{10.6} (\num{0.1}) & \num{19.1} (\num{0.2}) \\
  & \textasciitilde equivalent & \num{2038} & \num{79.3} (\num{0.3}) & \num{10.7} (\num{0.1}) & \num{19.8} (\num{0.3}) \\
  & higher & \num{2510} & \num{80.2} (\num{0.3}) & \num{12.0} (\num{0.2}) & \num{19.9} (\num{0.2}) \\
  \bottomrule
\end{tabular}
\caption{Utility vs Cost over five different algorithms (\randalg{}, \unifalg{}, \swapalg{}, \cut, \brutas) and the actual admissions decisions made at our university. (Since \cut{} is a probabilistic method, the cost is given over a range of values.) For each of the algorithms, we give results assuming a cost/budget \emph{lower}, roughly \emph{equivalent}, and \emph{higher} than that used by the real admissions committee. Both \cut{} and \brutas{} produce equivalent cohorts to the actual admissions process with lower cost, or produce high quality cohorts than the actual admissions process with equivalent cost.  Our extension of \swapalg{} to this multi-tiered setting also performs well relative to \randalg{} and \unifalg{}, but performs worse than both \cut{} and \brutas{} across the board.}
\vspace{-10pt}
\label{tab:real}
\end{table*}

\section{Limitations}
This experiment uses real data but is still a simulation. The classifier is not a true predictor of utility of an applicant. Indeed, finding an estimate of utility for an applicant is a nontrivial task. Additionally, the data that we are using incorporates human bias in admission decisions, and reviewer scores. This means that the classifier---and therefore the algorithms---may produce a biased cohort.  Training a human committee or using quantitative methods to (attempt to) mitigate the impact of human bias in review scoring is important future work.  Similarly, \cut{} and \brutas{} require an objective function to run; recent advances in human value judgment aggregation~\citep{Freedman18:Adapting,Noothigattu18:Making} could find use in this decision-making framework. Additionally, although we were able to empirically show that both \cut{} and \brutas{} perform well using a submodular function $\wdiverse$, there are no theoretical guarantees for submodular functions.

\section{Structured Interviews for Graduate Admissions}\label{app:interview}
The goal of the interview is to help judge whether the applicant should be granted admission. The interviewer asks questions to provide insight into the applicant’s academic capabilities, research experience, perseverance, communication skills, and leadership abilities, among others.
 
Some example questions include:
\begin{itemize}
\item Describe a time when you have faced a difficult academic challenge or hurdle that you successfully navigated. What was the challenge and how did you handle it?
\item What research experience have you had? What problem did you work on? What was most challenging? What did you learn most from the experience?
\item Have you had any experiences where you were playing a leadership or mentoring role for others?
\item What are your goals for graduate school? What do you want to do when you graduate?
\item What concerns do you have about the program? What will your biggest challenge be? Is there anything else we should discuss?
\end{itemize}

The interviewer fills out an answer and score sheet during the interview. Each interviewer follows the same questions and is provided with the same answer and score sheet. This allows for consistency across interviews.

\end{document}